\newcommand{\mylabel}[2]{#2\def\@currentlabel{#2}\label{#1}}
\setlist[enumerate,1]{leftmargin=*,wide=0em, label = {\bfseries \roman*.}}
\setlist[itemize,1]{leftmargin=*,wide=0em, itemsep=0pt,topsep=0pt}
\def\A{{\bf A}}
\def\a{{\bf a}}
\def\C{{\bf C}}
\def\c{{\bf c}}
\def\g{{\bf g}}
\def\H{{\bf H}}
\def\I{{\bf I}}
\def\M{{\bf M}}
\def\pp{{\bf p}}
\def\R{{\bf R}}
\def\S{{\bf S}}
\def\T{{\bf T}}
\def\U{{\bf U}}
\def\u{{\bf u}}
\def\V{{\bf V}}
\def\v{{\bf v}}
\def\w{{\bf w}}
\def\X{{\bf X}}
\def\x{{\bf x}}
\def\y{{\bf y}}
\def\0{{\bf 0}}
\def\1{{\bf 1}}
\def\AM{{\mathcal A}}
\def\EM{{\mathcal E}}
\def\JM{{\mathcal J}}
\def\NM{{\mathcal N}}
\def\OM{{\mathcal O}}
\def\RB{{\mathbb R}}
\def\EB{{\mathbb E}}
\def\PB{{\mathbb P}}
\def\Si{\mbox{\boldmath$\Sigma$\unboldmath}}
\def\Gam{\mbox{\boldmath$\Gamma$\unboldmath}}
\def\De{\mbox{\boldmath$\Delta$\unboldmath}}
\def\Ups{\mbox{\boldmath$\Upsilon$\unboldmath}}
\def\argmin{\mathop{\rm argmin}}
\def\nnz{\mathrm{nnz}}
\def\sgn{\mathrm{sgn}}
\def\rk{\mathrm{rank}}
\def\etal{{\em et al.\/}\,}
\newtheorem{theorem}{Theorem}
\newtheorem{lemma}[theorem]{Lemma}
\newtheorem{corollary}[theorem]{Corollary}
\newtheorem{proposition}[theorem]{Proposition}
\newtheorem{remark}{Remark}
\newtheorem{definition}{Definition}
\newtheorem{assumption}{Assumption}
\title{GIANT: Globally Improved Approximate Newton Method for Distributed Optimization}
\author{
  Shusen Wang \\
  University of California at Berkeley \\
  \texttt{wssatzju@gmail.com} \\
  \And
  Farbod Roosta-Khorasani \\
  University of Queensland \\
  \texttt{fred.roosta@uq.edu.au} \\ 
  \And
  Peng Xu \\
  Stanford University \\
  \texttt{pengxu@stanford.edu} \\
  \And
  Michael W.\ Mahoney \\
  University of California at Berkeley \\
  \texttt{mmahoney@stat.berkeley.edu} \\
}
\begin{document}

\maketitle

\begin{abstract}
For distributed computing environment, we consider the empirical risk minimization problem and propose a distributed and communication-efficient Newton-type optimization method. At every iteration, each worker locally finds an Approximate NewTon (ANT) direction, which is sent to the main driver. The main driver, then, averages all the ANT directions received from workers to form a {\it Globally Improved ANT} (GIANT) direction. GIANT is highly communication efficient and naturally exploits the trade-offs between local computations and global communications in that more local computations result in fewer overall rounds of communications. Theoretically, we show that GIANT enjoys an improved convergence rate as compared with first-order methods and existing distributed Newton-type methods. Further, and in sharp contrast with many existing distributed Newton-type methods, as well as popular first-order methods, a highly advantageous practical feature of GIANT is that it only involves one tuning parameter. We conduct large-scale experiments on a computer cluster and, empirically, demonstrate the superior performance of GIANT.
\end{abstract}


\section{Introduction}
\label{sec:introduction}

The large-scale nature of many modern ``big-data'' problems, arising routinely in science, engineering, financial markets, Internet and social media, etc., poses significant computational as well as storage challenges for machine learning procedures. For example, the scale of data gathered in many applications nowadays typically exceeds the memory capacity of a single machine, which, in turn, makes learning from data ever more challenging.
In this light, several modern parallel (or distributed) computing architectures, 
e.g., MapReduce \cite{dean2008mapreduce}, Apache Spark \cite{zaharia2010spark,meng2016mllib}, 
GraphLab \cite{low2012graphlab}, and Parameter Server \cite{li2014scaling},
have been designed to operate on and learn from data at massive scales. Despite the fact that, when compared to a single machine, distributed systems tremendously reduce the storage and (local) computational costs, the inevitable cost of communications across the network can often be the bottleneck of distributed computations. As a result, designing methods which can strike an appropriate balance between the cost of computations and that of communications are increasingly desired.

The desire to reduce communication costs is even more pronounced in the {\it federated learning} framework~\cite{konevcny2016federated,konevcny2016federated2,bonawitz2017practical,mcmahan2017communication,smith2017federated}. 
Similarly to typical settings of distributed computing, federated learning assumes data are distributed over a network across nodes that enjoy reasonable computational resources, e.g., mobile phones, wearable devices, and smart homes. 
However, the network has severely limited bandwidth and high latency. 
As a result, it is imperative to reduce the communications between the center and a node or between two nodes. 
In such settings, the preferred methods are those which can perform expensive local computations with the aim of reducing the overall communications across the network.

Optimization algorithms designed for distributed setting are abundantly found in the literature. 
First-order methods, i.e, those that rely solely on gradient information, are often embarrassingly parallel and easy to implement.
Examples of such methods include distributed variants of stochastic gradient descent (SGD) \cite{mahajan2013parallel,recht2011hogwild,zinkevich2010parallelized},
accelerated SGD \cite{shamir2014distributed}, 
variance reduction SGD \cite{lee2015distributed,reddi2015variance}, stochastic coordinate descent methods~\cite{fercoq2016optimization,liu2015asynchronous,necoara2016parallel,richtarik2016parallel}
and dual coordinate ascent algorithms~\cite{richtarik2016distributed,yang2013trading,zheng2016general}.
The common denominator in all of these methods is that they significantly reduce the amount of local computation. But this blessing comes with an inevitable curse that they, in turn, may require a far greater number of iterations and hence, incur more communications overall. 
Indeed, as a result of their highly iterative nature, many of these first-order methods require several rounds of communications and, potentially, synchronizations in every iteration, and they must do so for many iterations.
In a computer cluster, due to limitations on the network's bandwidth and latency and software system overhead,
communications across the nodes can oftentimes be the critical bottleneck for the distributed optimization.
Such overheads are increasingly exacerbated by the growing number of compute nodes in the network, limiting the scalability of any distributed optimization method that requires many communication-intensive iterations.

To remedy such drawbacks of high number of iterations for distributed optimization, communication-efficient second-order methods, i.e., those that, in addition to the gradient, incorporate curvature information, have also been recently considered~\cite{mahajan2013efficient,shamir2014communication,reddi2016aide,zhang2015disco,jaggi2014communication,ma2015adding,smith2016cocoa}; see also Section~\ref{sec:realted_work}. 
The common feature in all of these methods is that they intend to increase the local computations with the aim of reducing the overall iterations, and hence, lowering the communications. In other words, these methods are designed to perform as much local computation as possible before making any communications across the network.
Pursuing similar objectives, in this paper, we propose a Globally Improved Approximate NewTon (GIANT) method and establish its improved theoretical convergence properties as compared with other similar second-order methods. 
We also showcase the superior empirical performance of GIANT through several numerical experiments. 

The rest of this paper is organized as follows.
Section~\ref{sec:realted_work} briefly reviews prior works most closely related to this paper. Section~\ref{sec:contributions} gives a summary of our main contributions. 
The formal description of the distributed empirical risk minimization problem is given in Section~\ref{sec:problem}, followed by the derivation of various steps of GIANT in Section~\ref{sec:alg}.
Section~\ref{sec:theory} presents the theoretical guarantees.
Section~\ref{sec:exp} shows our large-scale experiments.
The supplementary material provides the proofs.

\subsection{Related Work}
\label{sec:realted_work}

Among the existing distributed second-order optimization methods, the most notably are DANE \cite{shamir2014communication}, AIDE \cite{reddi2016aide}, and DiSCO \cite{zhang2015disco}. Another similar method is CoCoA \cite{jaggi2014communication,ma2015adding,smith2016cocoa}, which is analogous to second-order methods in that it involves sub-problems which are local quadratic approximations to the dual objective function. However, despite the fact that CoCoA makes use of the smoothness condition, it does not exploit any explicit second-order information.


We can evaluate the theoretical properties the above-mentioned methods in light of comparison with optimal first-order methods, i.e., accelerated gradient descent (AGD) methods \cite{nesterov1983method,nesterov2013introductory}. 
It is because AGD methods are mostly embarrassingly parallel and can be regarded as the baseline for distributed optimization. Recall that AGD methods, being optimal in worst-case analysis sense~\cite{nemirovskii1983problem}, are guaranteed to convergence to $\EM$-precision in $\OM (\sqrt{\kappa} \log \frac{1}{\EM})$ iterations \cite{nesterov2013introductory}, where $\kappa$ can be thought of as the condition number of the problem. Each iteration of AGD has two rounds of communications---broadcast or aggregation of a vector.

In Table~\ref{tab:communications}, we compare the communication costs with other methods for the ridge regression problem: $\min_{\w } \frac{1}{n} \| \X \w - \y \|_2^2 + \gamma \|\w \|_2^2$.\footnote{As for general convex problems, it is very hard to present the comparison in an easily understanding way. 
	This is why we do not compare the convergence for the general convex optimization.
	}
The communication cost of GIANT has a mere logarithmic dependence on the condition number $\kappa$;
in contrast, the other methods have at least a square root dependence on $\kappa$.
Even if $\kappa$ is assumed to be small, say $\kappa = \OM ( \sqrt{ n} )$, which was made by \cite{zhang2015disco}, GIANT's bound is better than the compared methods regarding the dependence on the number of partitions, $m$.

Our GIANT method is motivated by the subsampled Newton method \cite{roosta2016sub,xu2016sub,pilanci2017newton}.
Later we realized that very similar idea has been proposed by DANE \cite{shamir2014communication}\footnote{GIANT and DANE are identical for quadratic programming; they are different for the general convex problems.}
and FADL~\cite{mahajan2013efficient}, but we show better convergence bounds.
Mahajan \etal \cite{mahajan2013efficient} has conducted comprehensive empirical studies and concluded that the local quadratic approximation, which is very similar to GIANT, is the final method which they recommended.


\begin{table}[t]\setlength{\tabcolsep}{0.3pt}
	\def\arraystretch{1.2}
	\caption{The number of communications (proportional to the number of iterations) required for the ridge regression problem. 
		Here $n$ is the total number of samples, $d$ is the number of features, $m$ is the number of partitions,
		$\gamma$ is the regularization parameter,
		$\kappa$ is the condition number of the Hessian matrix, $\mu$ is the matrix coherence, $f$ is the objective function, $\w^\star$ is the optimal solution, $\w_t$ is the output after $t$ iterations, and $\tilde{\OM}$ conceals constants (analogous to $\mu$) and logarithmic factors.
	}
	\label{tab:communications}
	\vspace{-1mm}
	\begin{center}
		\begin{footnotesize}
			\begin{tabular}{c c c}
				\hline
				{\bf Method} & \#{\bf Iterations} & {\bf Metric} \\
				\hline
				~~~~GIANT [this work]~~~~ 
				& ~~~~$t={\OM} \Big( \frac{ \log ({ d \kappa } / { \EM } ) }{\log ( { n } / { \mu d m } ) }   \Big)$~~~~
				& ~~$\| \w_t - \w^\star \|_2 \leq \EM $~~ \\
				~~~~DiSCO~\cite{zhang2015disco}~~~~ 
				& ~~~~$t=\tilde{\OM } \Big( \frac{ d \kappa^{1/2 } m^{3/4} }{ n^{3/4} } + \frac{ \kappa^{1/2 } m^{1/4} }{ n^{1/4} } \log \frac{1}{\EM } \Big) $~~~~ 
				& ~~$f (\w_t) - f (\w^\star ) \leq \EM $~~ \\
				~~~~DANE~\cite{shamir2014communication}~~~~  
				& ~~~~$t=\tilde\OM \Big( \frac{\kappa^2 m}{n}  \log \frac{1}{\EM } \Big)$~~~~ 
				& ~~$f (\w_t) - f (\w^\star ) \leq \EM $~~ \\
				~~~~AIDE~\cite{reddi2016aide}~~~~  
				& ~~~~$t=\tilde\OM \Big( \frac{ \kappa^{1/2 } m^{1/4} }{ n^{1/4} } \log \frac{1}{\EM } \Big) $~~~~ 
				& ~~$f (\w_t) - f (\w^\star ) \leq \EM $~~ \\
				~~~~CoCoA~\cite{smith2016cocoa}~~~~  
				& ~~~~$t=\OM \Big( \big(n + \tfrac{1}{\gamma} \big) \log \frac{n}{\EM } \Big) $~~~~ 
				& ~~$f (\w_t) - f (\w^\star ) \leq \EM $~~ \\
				~~~~AGD~~~~ 
				& $t = \OM \Big( {\kappa }^{1/2} \log \frac{d}{ \EM }\Big)$
				& ~~$\| \w_t - \w^\star \|_2 \leq \EM  $~~ \\
				\hline
			\end{tabular}
		\end{footnotesize}
	\end{center}
\end{table}


\begin{wrapfigure}{r}{0.52\linewidth}
	\vspace{-20pt}
	\begin{center}
		\includegraphics[width=0.48\textwidth]{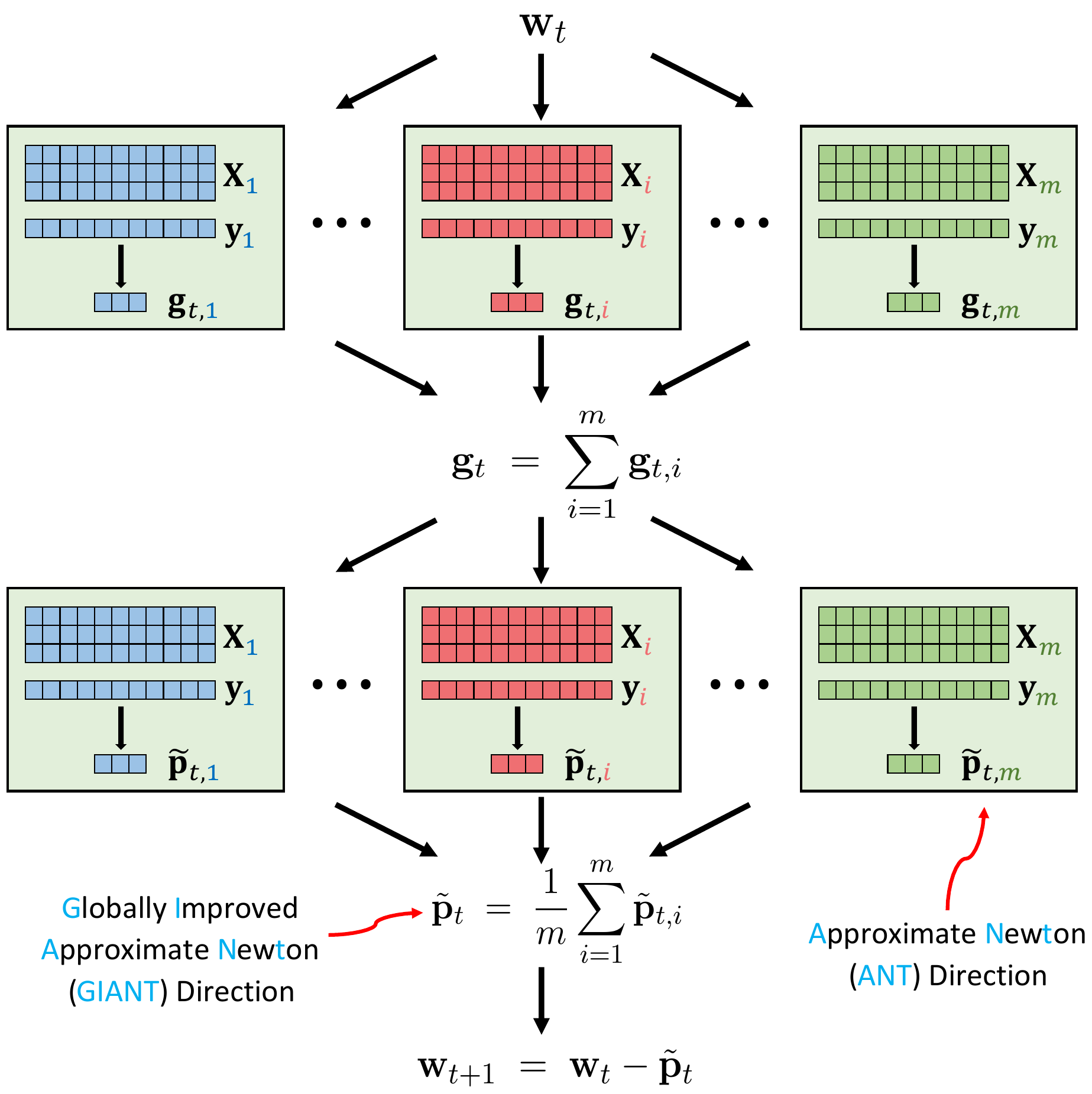}
	\end{center}
	\vspace{-10pt}
	\caption{One iteration of GIANT. 
		Here $\X$ and $\y$ are respectively the features and lables;
		$\X_i$ and $\y_i$ denotes the blocks of $\X$ and $\y$, respectively.
		Each one-to-all operation is a \textsf{Broadcast} and 
		each all-to-one operation is a \textsf{Reduce}.
	}
	\vspace{-2pt}
	\label{fig:algorithm}
\end{wrapfigure}

\subsection{Contributions} 
\label{sec:contributions}

In this paper, we consider the problem of empirical risk minimization involving smooth and strongly convex objective function (which is the same setting considered in prior works of DANE, AIDE, and DiSCO). In this context, we propose a Globally Improved Approximate NewTon (GIANT) method and establish its theoretical and empirical properties as follows.

	$\bullet$
	For quadratic objectives, we establish global convergence of GIANT.
	To attain a fixed precision, the number of iterations of GIANT (which is proportional to the communication complexity) has a mere \emph{logarithmic} dependence on the condition number.
	In contrast, the prior works have at least square root dependence.
	In fact, for quadratic problems, GIANT and DANE \cite{shamir2014communication} can be shown to be identical. In this light, for such problems, our work improves upon the convergence of DANE.
	
$\bullet$
	For more general problems, GIANT has \emph{linear-quadratic convergence} in the vicinity of the optimal solution, which we refer to as ``local convergence''.\footnote{The second-order methods typically have the local convergence issue. Global convergence of GIANT can be trivially established by following \cite{roosta2016global}, however, the convergence rate is not very interesting, as it is worse than the first-order methods.}
	The advantage of GIANT mainly manifests in \emph{big-data} regimes where there are many data points available. 
	In other words, when the number of data points is much larger than the number of features, the theoretical convergence of GIANT enjoys  significant improvement over other similar methods.
	
$\bullet$
	In addition to theoretical features, {GIANT} also exhibits desirable practical advantages. For example, in sharp contrast with many existing distributed Newton-type methods, as well as popular first-order methods, GIANT only involves \emph{one tuning parameter}, i.e., the maximal iterations of its sub-problem solvers, which makes GIANT easy to implement in practice.
	Furthermore, our experiments on a computer cluster show that GIANT consistently outperforms AGD, L-BFGS, and DANE.


\section{Problem Formulation} \label{sec:problem}

In this paper, we consider the distributed variant of empirical risk minimization, a supervised-learning problem arising very often in machine learning and data analysis~\cite{shalev2014understanding}. More specifically, let $\x_1 , \cdots , \x_n \in \RB^d$ be the input feature vectors and $y_1 , \cdots , y_n \in \RB$
be the corresponding response.
The goal of supervised learning is to compute a model from the training data,
which can be achieved by minimizing an empirical risk function, i.e., 
\begin{small}
\begin{eqnarray} \label{eq:problem}
	\min_{\w \in \RB^d } \;
	\bigg\{  f (\w )
	\; \triangleq \; 
	\frac{1}{n} \sum_{j=1}^n  \ell_j (\w^T \x_j) + \frac{\gamma }{2} \| \w \|_2^2 \bigg\} ,
\end{eqnarray}
\end{small}%
where $\ell_j : \RB \mapsto \RB$ is convex, twice differentiable, and smooth.
We further assume that $ f $ is strongly convex, which in turn, implies the uniqueness of the minimizer of~\eqref{eq:problem}, denoted throughout the text by $ \w^\star $.
Note that $y_j$ is implicitly captured by $\ell_j$. 
Examples of the loss function, $ \ell_j $, appearing in~\eqref{eq:problem} include 
\begin{align*}
	&\textrm{linear regression:} \qquad
	\ell_j (z_j) = \tfrac{1}{2} (z_j - y_j)^2 ,\\
	&\textrm{logistic regression:} \quad \;
	\ell_j (z_j) = \log ( 1 + e^{-z_j y_j } ) .
\end{align*}


Suppose the $n$ feature vectors and loss functions $ (\x_1, \ell_1) , \cdots , (\x_n, \ell_n) $ are partitioned among $m$ worker machines.
Let $s \triangleq  {n}/{m}$ be the local sample size.
Our theories require $s > d$; nevertheless, GIANT empirically works well for $s < d$.

We consider solving \eqref{eq:problem} in the regimes where $n \gg d$. 
We assume that the data points, $\{\x_i\}_{i=1}^{n}$ are partitioned among $m$ machines, with possible overlaps, such that the number of local data is larger than $d$.
Otherwise, if $n \ll d$, we can consider the dual problem and partition features.
If the dual problem is also decomposable, smooth, strongly convex, and unconstrained, e.g., ridge regression, then our approach directly applies.


\section{Algorithm Description} \label{sec:alg}

In this section, we present the algorithm derivation and complexity analysis. 
GIANT is a centralized and synchronous method; one iteration of GIANT is depicted in Figure~\ref{fig:algorithm}.
The key idea of GIANT is avoiding forming of the exact Hessian matrices $\H_t \in \RB^{d\times d}$ in order to avoid expensive communications.

\subsection{Gradient and Hessian}

GIANT iterations require the exact gradient, which in the $t$-th iteration, can be written as 
\begin{small} 
\begin{eqnarray} \label{eq:gradient}
	\g_t \; = \; \nabla f(\w_t) 
	\; = \; \frac{1}{n} \sum_{j=1}^n  \ell_j' (\w_t^T \x_j ) \; \x_j + \gamma \w_t 
	\; \in \; \RB^d .
\end{eqnarray}
\end{small}%
The gradient, $ \g_t $ can be computed, embarrassingly, in parallel. 
The driver \textsf{Broadcasts} $\w_t$ to all the worker machines. Each machine then uses its own $\{ (\x_j, \ell_j) \}$ to compute its local gradient. Subsequently, the driver performs a \textsf{Reduce} operation to sum up the local gradients and get $\g_t$. 
The per-iteration communication complexity is $\tilde{\OM } (d )$ words, where $\tilde{\OM } $ hides the dependence on $m$ (which can be $m$ or $\log m$, depending on the network structure).

More specifically, in the $t$-th iteration, the Hessian matrix at $\w_t \in \RB^d$ can be written as
\begin{small} 
\begin{eqnarray} \label{eq:hessian}
	\H_t \; = \; \nabla^2 f(\w_t) 
	\; = \; \frac{1}{n} \sum_{j=1}^n  \ell_j'' (\w_t^T \x_j ) \cdot \x_j \x_j^T + \gamma \I_d .
\end{eqnarray}
\end{small}%
To compute the exact Hessian, the driver must aggregate the $m$ local Hessian matrices (each of size $d\times d$) by one \textsf{Reduce} operation, which has $\tilde{\OM }  (d^2 )$ communication complexity and is obviously impractical when $d$ is thousands.
The Hessian approximation developed in this paper has a mere $\tilde{\OM }  (d )$ communication complexity which is the same to the first-order methods.

\subsection{Approximate NewTon (ANT) Directions}

Assume each worker machine locally holds $s$ random samples drawn from $\{ (\x_j , \ell_j) \}_{j=1}^n$.\footnote{If the samples themselves are i.i.d.\ drawn from some distribution, then a data-independent partition is equivalent to uniform sampling. Otherwise, the system can \textsf{Shuffle} the data.}
Let $\JM_i$ be the set containing the indices of the samples held by the $i$-th machine, and $ s = |\JM_i | $ denote its size.
Each worker machine can use its local samples to form a local Hessian matrix 
\begin{eqnarray*} 
	\widetilde{\H}_{t,i}
	\; = \; \frac{1}{s} \sum_{j \in \JM_i }  \ell_j'' (\w_t^T \x_j ) \cdot \x_j \x_j^T + \gamma \I_d  .
\end{eqnarray*}
Clearly, $\EB [ \widetilde{\H}_{t,i} ] = \H_t$.
We define the Approximate NewTon ({ANT}) direction by $ \tilde{\pp}_{t,i}
\; = \; \widetilde{\H}_{t,i}^{-1} \g_t$.
The cost of computing the ANT direction $\tilde{\pp}_{t,i}$ in this way, involves $\OM (s d^2)$ time to form the $d\times d$ dense matrix $\widetilde{\H}_{t,i}$ and $\OM (d^3)$ to invert it. 
To reduce the computational cost, we opt to compute the ANT direction by the conjugate gradient (CG) method~\cite{nocedal2006numerical}.
Let $\a_{j} = \sqrt{ \ell_j'' (\w_t^T \x_j ) } \cdot \x_j \in \RB^d$,
\begin{eqnarray} \label{eq:def:at}
	\A_t 
	\; = \; [\a_{1}^T ; \cdots ; \a_{n}^T ] 
	\; \in \; \RB^{n\times d} ,
\end{eqnarray}
and $\A_{t,i} \in \RB^{s\times d}$ contain the rows of $\A_t$ indexed by the set $\JM_i$. 
Using the matrix notation, we can write the local Hessian matrix as
\begin{eqnarray} \label{eq:def_local_hessian}
	\widetilde{\H}_{t,i}
	\; = \; \tfrac{1}{s} \A_{t,i}^T \A_{t,i} + \gamma \I_d  .
\end{eqnarray}
Employing CG, it is thus unnecessary to explicitly form $\widetilde{\H}_{t,i}$. Indeed, one can simply approximately solve
\begin{eqnarray}\label{eq:ANT}
	\big( \tfrac{1}{s} \A_{t,i}^T \A_{t,i} + \gamma \I_d  \big) \, \pp
	\; = \; \g_t
\end{eqnarray}
in a ``Hessian-free'' manner, i.e., by employing only Hessian-vector products in CG iterations.

\subsection{Globally Improved ANT (GIANT) Direction}

Using random matrix concentration, we can show that for sufficiently large $s$, the local Hessian matrix $\widetilde{\H}_{t,i}$ is a spectral approximation to $\H_t$. Now let $\tilde{\pp}_{t,i}$ be an ANT direction.
The Globally Improved ANT (GIANT) direction is defined as
\begin{eqnarray}
	\tilde{\pp}_{t}
	\; = \; \frac{1}{m} \sum_{i=1}^m \tilde{\pp}_{t,i}
	\; = \; \frac{1}{m} \sum_{i=1}^m \widetilde{\H}_{t,i}^{-1} \g_t 
	\; = \; \widetilde{\H}_t^{-1} \g_t .
\end{eqnarray}
Interestingly, here $ \widetilde{\H}_t$ is the \emph{harmonic mean} defined as 
$\widetilde{\H}_t \triangleq ( \frac{1}{m} \sum_{i=1}^m \widetilde{\H}_{t,i}^{-1} )^{-1}$,
whereas the true Hessian $\H_t$ is the \emph{arithmetic mean} defined as $\H_t \triangleq \frac{1}{m} \sum_{i=1}^m \widetilde{\H}_{t,i}$.
If the data is incoherent, that is, the ``information'' is spread-out rather than concentrated to a small fraction of samples,
then the harmonic mean and the arithmetic mean are very close to each other,
and thereby the GIANT direction $\tilde{\pp}_{t} = \widetilde{\H}^{-1} \g_t$ very well approximates the true Newton direction $\H^{-1} \g_t$.
This is the intuition of our global improvement.

The motivation of using the harmonic mean, $ \widetilde{\H}_t $, to approximate the arithmetic mean (the true Hessian matrix), $\H_t$, is the communication cost.
Computing the arithmetic mean $\H_t \triangleq \frac{1}{m} \sum_{i=1}^m \widetilde{\H}_{t,i}$ would require the communication of $d\times d$ matrices which is very expensive.
In contrast, computing $\tilde{\pp}_{t}$ merely requires the communication of $d$-dimensional vectors.

\subsection{Time and Communication Complexities}

For each worker machine, the per-iteration time complexity is $\OM ( s d q )$, where $s$ is the local sample size and $q$ is the number of CG iterations for (approximately) solving \eqref{eq:ANT}.
(See Proposition~\ref{cor:cg_q} for the setting of $q$.)
If the feature matrix $\X \in \RB^{n\times d}$ has a sparsity of $\varrho = {\nnz (\X)}/{(n d)} < 1$, the expected per-iteration time complexity is then $\OM (\varrho s d q )$. 

Each iteration of GIANT has four rounds of communications: two \textsf{Broadcast} for sending and two \textsf{Reduce} for aggregating some $d$-dimensional vector.
If the communication is in a tree fashion, the per-iteration communication complexity is then $\tilde{\OM }  (d )$ words, where $\tilde{\OM} $ hides the factor involving $m$ which can be $m$ or $\log m$.


\section{Theoretical Analysis} \label{sec:theory}

In this section, we formally present the convergence guarantees of GIANT. Section~\ref{sec:converge:quadratic} focuses on quadratic loss and treats the global convergence of GIANT. This is then followed by local convergence properties of GIANT for more general non-quadratic loss in Section~\ref{sec:converge:general}.
For the results of Sections~\ref{sec:converge:quadratic} and \ref{sec:converge:general}, we require that the local linear system to obtain the local Newton direction is solved exactly. Section~\ref{sec:converge:inexact} then relaxes this requirement to allow for inexactness in the solution, and establishes similar convergence rates as those of exact variants. 

For our analysis here, we frequently make use of the notion of \emph{matrix row coherence}, defined as follows.

\begin{definition} [Coherence]
	\label{def:coherence}
	Let $\A \in \RB^{n\times d}$ be any matrix and $\U \in \RB^{n\times d}$ be its column orthonormal bases.
	The {\it row coherence} of $\A$ is $\mu (\A ) = \frac{n}{d} \max_{j } \|\u_{j}\|_2^2 \in [1, \frac{n}{d}]$. 
\end{definition}

\begin{remark} \label{remark:coherence}
	Our work assumes $\A_t \in \RB^{n\times d} $, which is defined in \eqref{eq:def:at}, is incoherent, namely $\mu (\A_t )$ is small.
	The prior works, DANE, AIDE, and DiSCO, did not use the notation of \emph{incoherence};
	instead, they assume $ \nabla_{\w }^2 l_j (\w^T \x_j ) \, |_{\w = \w_t} = \a_j \a_j^T$ is upper bounded for all $j \in [n]$ and $\w_t  \in \RB^d$, 
	where $\a_{j} \in \RB^d$ is the $j$-th row of $\A_t$.
	Such an assumption is different from but has similar implication as our incoherence assumption;
	under either of the two assumptions, it can be shown that the Hessian matrix can be approximated using a subset of samples selected uniformly at random.
\end{remark}

\subsection{Quadratic Loss} \label{sec:converge:quadratic}
In this section, we consider a special case of \eqref{eq:problem} with $\ell_i (z) = (z - y_i)^2/2$, i.e., the quadratic optimization problems:
\begin{small}
\begin{eqnarray} \label{eq:linear}
	f (\w )
	& = & \tfrac{1}{2n} \big\| \X \w - \y \big\|_2^2 + \tfrac{\gamma}{2}  \| \w \|_2^2.
\end{eqnarray}
\end{small}%
The Hessian matrix is given as $\nabla^2 f (\w) = \frac{1}{n} \X^T \X + \gamma \I_d$, which does not depend on $\w$. Theorem~\ref{thm:quadratic} describes the convergence of the error in the iterates, i.e., $\De_t \triangleq \w_t - \w^\star$.

\begin{theorem} \label{thm:quadratic}
	Let $\mu$ be the row coherence of $\X \in \RB^{n\times d}$ and $m$ be the number of partitions.
	Assume the local sample size satisfies $s \geq \frac{3 \mu  d }{ {\textcolor{OliveGreen}{\eta}}^{2} } \log \frac{m d}{\delta} $ for some ${\textcolor{OliveGreen}{\eta}} , \delta \in (0, 1)$.
	It holds with probability $1-\delta$ that
	\begin{small}
	\begin{align*}
		\| \De_t \|_2 \leq \alpha^t \, \sqrt{ \kappa   } \, \| \De_0 \|_2,
	\end{align*}
\end{small}%
	where 
	$\alpha = \frac{\textcolor{OliveGreen}{\eta}}{\sqrt{ m } }  + {\textcolor{OliveGreen}{\eta}}^2 $
	and $\kappa$ is the condition number of $\nabla^2 f(\w) = \frac{1}{n} \X^T \X + \gamma \I_d$.
\end{theorem}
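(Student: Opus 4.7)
The plan is as follows. Because the loss is quadratic, $\H = \frac{1}{n}\X^T\X + \gamma \I_d$ is constant and the identity $\H \w^\star = \frac{1}{n}\X^T\y$ yields $\g_t = \H \De_t$. The GIANT update $\w_{t+1} = \w_t - \widetilde{\H}^{-1}\g_t$ therefore reduces to the linear recursion
\begin{align*}
\H^{1/2}\De_{t+1} \;=\; \bigl(\I - \H^{1/2}\widetilde{\H}^{-1}\H^{1/2}\bigr)\, \H^{1/2}\De_t.
\end{align*}
So, working in the $\H$-norm $\|\cdot\|_\H := \|\H^{1/2}\cdot\|_2$, the theorem reduces to bounding the spectral norm of $\Psi := \I - \H^{1/2}\widetilde{\H}^{-1}\H^{1/2}$. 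Now, because $\widetilde{\H}^{-1} = \frac{1}{m}\sum_i \widetilde{\H}_i^{-1}$, conjugation by $\H^{\pm 1/2}$ gives $\H^{1/2}\widetilde{\H}^{-1}\H^{1/2} = \frac{1}{m}\sum_i (\I+\Gam_i)^{-1}$, where $\Gam_i := \H^{-1/2}(\widetilde{\H}_i - \H)\H^{-1/2}$ measures the $\H$-normalized deviation of the $i$-th local Hessian from the true one.

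Next I would split $\Psi$ into a first-order piece plus a quadratic remainder. Using the algebraic identity $\I - (\I+\Gam_i)^{-1} = \Gam_i - \Gam_i^2(\I+\Gam_i)^{-1}$ and averaging over $i$,
\begin{align*}
\Psi \;=\; \overline{\Gam} \;-\; \frac{1}{m}\sum_{i=1}^m \Gam_i^2 (\I+\Gam_i)^{-1}, \qquad \overline{\Gam} := \frac{1}{m}\sum_{i=1}^m \Gam_i.
\end{align*}
Provided $\max_i \|\Gam_i\|_2 \leq \eta < 1$, the triangle inequality yields $\|\Psi\|_2 \leq \|\overline{\Gam}\|_2 + \eta^2/(1-\eta)$. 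Thus the whole argument reduces to two deviation bounds: (a) a uniform local bound $\max_i \|\Gam_i\|_2 \leq \eta$ and (b) a sharper pooled bound $\|\overline{\Gam}\|_2 \leq \eta/\sqrt{m}$.

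Both bounds are applications of the matrix Bernstein inequality to the i.i.d.\ summands $\Z_j := \H^{-1/2}\x_j\x_j^T\H^{-1/2}$. Using the SVD of $\X$ together with the coherence definition, one shows $\|\Z_j\|_2 \leq \mu d$ and $\mathbb{E}[\Z_j] \preceq \I$. For bound (a), $\Gam_i$ is the centred average of $s$ such summands, so matrix Bernstein combined with the hypothesis $s \geq 3\mu d \eta^{-2}\log(md/\delta)$ and a union bound over the $m$ machines gives $\max_i\|\Gam_i\|_2 \leq \eta$ with probability $\geq 1-\delta/2$. For bound (b), $\overline{\Gam}$ is the centred average of the \emph{pooled} $ms$ samples, so the same inequality yields deviation $\lesssim \sqrt{\mu d \log(d/\delta)/(ms)} = \eta/\sqrt{m}$. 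Combining gives $\|\Psi\|_2 \leq \eta/\sqrt{m} + \eta^2/(1-\eta) \leq \alpha$ (after absorbing the $1/(1-\eta)$ into constants, which is harmless under the sample-size hypothesis). Induction produces $\|\De_t\|_\H \leq \alpha^t\|\De_0\|_\H$, and switching back to the Euclidean norm introduces the $\sqrt{\kappa}$ factor through $\|\De_t\|_2 \leq \|\H^{-1/2}\|_2 \, \|\H^{1/2}\|_2 \, \|\De_0\|_2$.

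The main obstacle is the pooled bound (b): the $1/\sqrt{m}$ improvement is a genuine variance-reduction effect, and it requires that the samples across machines are independent (with a disjoint partition the term $\overline{\Gam}$ vanishes identically and the statement is trivial). Carefully reusing matrix Bernstein on the union of the $m$ subsamples while simultaneously running a per-machine union bound---with the single logarithmic factor $\log(md/\delta)$ absorbing both events---is the only delicate part of the argument. Everything else (the Taylor-type identity, the spectral calculus with $\widetilde{\H}^{-1}$, and the $\H$-to-$\ell_2$ norm conversion) is routine.
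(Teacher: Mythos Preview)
Your proposal is correct and follows essentially the same route as the paper. The paper packages the identical decomposition---split $\H^{1/2}(\tilde{\pp}-\pp^\star)$ into the averaged first-order term $\overline{\Gam}$ plus a quadratic remainder, then invoke matrix concentration for both the per-machine and the pooled deviations---inside an auxiliary quadratic $\phi_t$ (its Lemmas~3 and~6), a detour introduced only so that the quadratic and non-quadratic cases can be treated uniformly; for the quadratic problem your direct spectral bound on the contraction operator $\Psi=\I-\H^{1/2}\widetilde{\H}^{-1}\H^{1/2}$ is equivalent and slightly cleaner.
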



\begin{remark}
	The theorem can be interpreted in the this way.
	Assume the total number of samples, $n$, is at least $ 3 \mu d m \log (m d)$.
	Then
	\begin{small}
	\begin{align*}
	\| \De_t \|_2 \; \leq \; 
	\Big( \tfrac{ 3 \mu d m \log (md / \delta ) }{ n } 
	+ \sqrt{\tfrac{ 3 \mu d \log (md / \delta ) }{ n }} \, \Big)^t \,
	\sqrt{ \kappa   } \, \| \De_0 \|_2 
	\end{align*}
\end{small}%
	holds with probability at least $1 - \delta$.
	
	If the total number of samples, $n$, is substantially bigger than $\mu d m$, then GIANT converges in a very small number of iterations.
	Furthermore, to reach a fixed precision, say $\| \De_t \|_2 \leq \EM$, the number of iterations, $t$, has a mere logarithmic dependence on the condition number, $\kappa$.
\end{remark}

\subsection{General Smooth Loss} \label{sec:converge:general}

For more general (not necessarily quadratic) but smooth loss, GIANT has linear-quadratic local convergence,
which is formally stated in Theorem~\ref{thm:general} and Corollary~\ref{cor:general}.
Let $\H^\star = \nabla^2 f (\w^\star) $ and $\H_t = \nabla^2 f (\w_t)$.
For this general case, we assume the Hessian is $L$-Lipschitz, which is a standard assumption in analyzing second-order methods.

\begin{assumption} \label{assumption:lipschitz}
	The Hessian matrix is $L$-Lipschitz continuous, i.e., $\big\| \nabla^2 f (\w) - \nabla^2 f (\w') \big\|_2 \leq L\|\w - \w' \|_2$, for all $ \w$ and $\w'$.
\end{assumption}

Theorem~\ref{thm:general} establishes the linear-quadratic convergence of $\De_t \triangleq \w_t - \w^\star$.
We remind that $\A_t \in \RB^{n\times d}$ is defined in \eqref{eq:def:at} (thus $\A_t^T \A_t + \gamma \I_d = \H_t$).
Note that, unlike Section~\ref{sec:converge:quadratic}, the coherence of $\A_t$, denote $\mu_t$, changes with iterations.

\begin{theorem} \label{thm:general}
	Let $\mu_t \in [1, {n}/{d}]$ be the coherence of $\A_t $ and $m$ be the number of partitions.
	Assume the local sample size satisfies $s_t \geq \frac{3 \mu_t  d }{ {\textcolor{OliveGreen}{\eta}}^{2} } \log \frac{m d}{\delta} $ for some ${\textcolor{OliveGreen}{\eta}} , \delta \in (0, 1)$.
	Under Assumption~\ref{assumption:lipschitz},
	it holds with probability $1-\delta$ that
	\begin{small}
	\begin{align*}
		\big\| \De_{t+1} \big\|_2 
		\; \leq \; \max \Big\{ \alpha \,
		\sqrt{ \tfrac{\sigma_{\max } (\H_t )}{\sigma_{\min} (\H_t )} }
		\big\| \De_t \big\|_2,  \tfrac{2L}{\sigma_{\min} (\H_t )} \big\| \De_t \big\|_2^2 \Big\} ,
	\end{align*}
\end{small}%
	where $\alpha =  \frac{\textcolor{OliveGreen}{\eta}}{\sqrt{ m }} + {\textcolor{OliveGreen}{\eta}}^2 $.
\end{theorem}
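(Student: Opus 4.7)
The plan is to decompose the error recursion $\De_{t+1} = \De_t - \widetilde{\H}_t^{-1}\g_t$ into a dominant linear piece and a quadratic residual. Using $\nabla f(\w^\star) = \0$ and the fundamental theorem of calculus, I would write $\g_t = \bar{\H}_t \De_t$ with $\bar{\H}_t = \int_0^1 \nabla^2 f(\w^\star + \tau \De_t)\, d\tau$, while Assumption~\ref{assumption:lipschitz} gives $\|\bar{\H}_t - \H_t\|_2 \leq \tfrac{L}{2}\|\De_t\|_2$. Hence $\g_t = \H_t \De_t + \r_t$ with $\|\r_t\|_2 \leq \tfrac{L}{2}\|\De_t\|_2^2$, and
\[
\De_{t+1} \;=\; (\I - \widetilde{\H}_t^{-1}\H_t)\De_t \;-\; \widetilde{\H}_t^{-1} \r_t .
\]
I will bound the linear and quadratic parts separately and combine them via $a + b \leq 2\max\{a, b\}$, absorbing constants to produce the stated $\max$ form.

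For the linear term, introduce $\B_i \triangleq \H_t^{-1/2}\widetilde{\H}_{t,i}\H_t^{-1/2}$ and $\bar{\C} \triangleq \tfrac{1}{m}\sum_i \B_i^{-1}$. The harmonic-mean identity $\widetilde{\H}_t^{-1} = \tfrac{1}{m}\sum_i \widetilde{\H}_{t,i}^{-1}$ then yields $\I - \widetilde{\H}_t^{-1}\H_t = \H_t^{-1/2}(\I - \bar{\C})\H_t^{1/2}$, so conjugation exposes the factor $\sqrt{\sigma_{\max}(\H_t)/\sigma_{\min}(\H_t)}$ and reduces the problem to controlling $\|\I - \bar{\C}\|_2$. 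The key algebraic step is the second-order Neumann-type identity $\B_i^{-1} = \I - (\B_i - \I) + (\B_i - \I)^2\B_i^{-1}$, which after averaging becomes
\[
\bar{\C} - \I \;=\; -(\bar{\B} - \I) + \tfrac{1}{m}\sum_i (\B_i - \I)^2 \B_i^{-1}, \qquad \bar{\B} \triangleq \tfrac{1}{m}\sum_i \B_i .
\]
The first summand will contribute $\eta/\sqrt{m}$ and the second $\eta^2/(1-\eta)$, reproducing $\alpha$ up to absolute constants.

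The main obstacle is obtaining the two matrix concentration bounds required by this decomposition: a per-worker estimate $\|\B_i - \I\|_2 \leq \eta$ for every $i \in [m]$, and a strictly sharper global estimate $\|\bar{\B} - \I\|_2 \leq \eta/\sqrt{m}$. Both follow from matrix Bernstein applied to the rank-one decomposition of $\A_t^T\A_t$, with the coherence $\mu_t$ controlling the per-sample operator norm; the sample-size condition $s_t \geq 3 \mu_t d \eta^{-2} \log(md/\delta)$ is calibrated to deliver the per-worker bound (after a union bound over the $m$ workers), while the global bound treats the pooled $m s_t$ samples as one empirical average, shrinking the standard-deviation term in Bernstein by a factor $\sqrt{m}$. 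Without this sharpening each term would scale as $\eta$ and the $1/\sqrt{m}$ improvement would be lost. Finally, the worker-wise spectral inequality $\widetilde{\H}_{t,i}^{-1} \preceq (1-\eta)^{-1}\H_t^{-1}$ averages to $\widetilde{\H}_t^{-1} \preceq (1-\eta)^{-1}\H_t^{-1}$, so $\|\widetilde{\H}_t^{-1}\r_t\|_2 \leq \tfrac{L\|\De_t\|_2^2}{2(1-\eta)\sigma_{\min}(\H_t)}$, which absorbs into $\tfrac{2L}{\sigma_{\min}(\H_t)}\|\De_t\|_2^2$ once $\eta$ is bounded away from $1$. Taking twice the maximum of the two pieces then yields the claimed linear-quadratic bound.
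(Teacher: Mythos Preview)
Your proposal is correct, and the linear-term analysis is essentially identical to the paper's: your Neumann-type split $\B_i^{-1}-\I = -(\B_i-\I) + (\B_i-\I)^2\B_i^{-1}$ is exactly the paper's decomposition $\H^{1/2}\widetilde{\H}_i^{-1}\H^{1/2} = \I + \Ups_i$ combined with $\H^{1/2}(\tilde{\pp}_i-\pp^\star) = \Gam_i(\I+\Ups_i)\H^{1/2}\pp^\star$ (where $\Gam_i = -(\B_i-\I)$), and your two concentration requirements coincide with the paper's Assumption~\ref{assumption:avg}, discharged via the same matrix-Bernstein argument (Lemma~\ref{lem:uniform}).

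The one genuine organizational difference is in how the linear and quadratic errors are merged. You split $\De_{t+1} = (\I-\widetilde{\H}_t^{-1}\H_t)\De_t - \widetilde{\H}_t^{-1}\r_t$ and bound the two pieces directly in $\ell_2$, paying a $1/(1-\eta)$ in the quadratic constant from $\|\widetilde{\H}_t^{-1}\|_2$. The paper instead encodes the linear-term quality as a bound on the auxiliary quadratic $\phi_t(\tilde{\pp}_t)\leq(1-\alpha^2)\min_\pp\phi_t(\pp)$ (Lemma~\ref{lem:avg}), then compares $\phi_t(\tilde{\pp}_t)$ against $(1-\alpha^2)\phi_t(\tfrac{1}{1-\alpha^2}\De_t)$ to obtain the \emph{implicit} inequality $\De_{t+1}^T\H_t\De_{t+1}\leq L\|\De_t\|_2^2\|\De_{t+1}\|_2 + \tfrac{\alpha^2}{1-\alpha^2}\De_t^T\H_t\De_t$ (Lemma~\ref{lem:convergence}); the quadratic constant $2L/\sigma_{\min}(\H_t)$ then falls out cleanly without invoking $\widetilde{\H}_t^{-1}$ at all. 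Your route is more elementary and transparent; the paper's $\phi_t$ detour yields slightly tidier constants and, as a bonus, isolates the approximation quality of $\tilde{\pp}_t$ in a form that is immediately reusable for the inexact-solve analysis (Lemma~\ref{lem:avg2}, Theorem~\ref{thm:inexact}).
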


\begin{remark}
	The standard Newton's method is well known to have local quadratic convergence;
	the quadratic term in Theorem~\ref{thm:general} is the same as Newton's method.
	The quadratic term is caused by the non-quadritic objective function.
	The linear term arises from the Hessian approximation.
	For large sample size, $s$, equivalently, small $\textcolor{OliveGreen}{\eta}$, the linear term is small.
\end{remark}

Note that in Theorem~\ref{thm:general} the convergence depends on the condition numbers of the Hessian at every point.
Due to the Lipschitz assumption on the Hessian, it is easy to see that 
the condition number of the Hessian in a neighborhood of $\w^\star$
is close to $\kappa (\H^\star)$.
This simple observation implies Corollary~\ref{cor:general}, in which the dependence of the local convergence of GIANT on iterations via $ \H_t $ is removed. 

\begin{assumption} \label{assumption:close}
	Assume $\w_t$ is close to $\w^\star$ in that
	$ \| \De_t \|_2 \leq 3 L \cdot \sigma_{\min} (\H^\star) $,
	where $L$ is defined in Assumption~\ref{assumption:lipschitz}.
\end{assumption}

\begin{corollary}\label{cor:general}
	Under the same setting as Theorem~\ref{thm:general} and Assumption \ref{assumption:close}, 
	it holds with probability $1-\delta$ that
	\begin{small}
	\begin{align*}
		\big\| \De_{t+1} \big\|_2 
		\; \leq \; \max \Big\{ 2 \alpha \sqrt{\kappa } \,
		\big\| \De_t \big\|_2, 
		\tfrac{3L}{\sigma_{\min} (\H^\star )} \big\| \De_t \big\|_2^2 \Big\} ,
	\end{align*}
\end{small}%
	where $\kappa$ is the condition number of the Hessian matrix at $\w^\star$.
\end{corollary}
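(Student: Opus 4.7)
The plan is to deduce Corollary~\ref{cor:general} directly from Theorem~\ref{thm:general} by controlling $\sigma_{\max}(\H_t)$ and $\sigma_{\min}(\H_t)$ in terms of the corresponding eigenvalues of $\H^\star$. The only new ingredient beyond Theorem~\ref{thm:general} is a simple eigenvalue perturbation argument based on the Lipschitz continuity of the Hessian (Assumption~\ref{assumption:lipschitz}) together with the closeness of $\w_t$ to $\w^\star$ (Assumption~\ref{assumption:close}, which I read in the natural way as $\|\De_t\|_2 \leq \sigma_{\min}(\H^\star)/(3L)$, so that $L\|\De_t\|_2$ is a small perturbation of the Hessian spectrum).

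The first step is to invoke Assumption~\ref{assumption:lipschitz} to get $\|\H_t - \H^\star\|_2 \leq L \|\De_t\|_2$, and then apply Weyl's inequality to the symmetric perturbation $\H_t - \H^\star$ to obtain
\begin{align*}
\sigma_{\min}(\H_t) \;\geq\; \sigma_{\min}(\H^\star) - L\|\De_t\|_2,
\qquad
\sigma_{\max}(\H_t) \;\leq\; \sigma_{\max}(\H^\star) + L\|\De_t\|_2.
\end{align*}
Using the closeness assumption, $L\|\De_t\|_2 \leq \tfrac{1}{3}\sigma_{\min}(\H^\star) \leq \tfrac{1}{3}\sigma_{\max}(\H^\star)$, these become $\sigma_{\min}(\H_t) \geq \tfrac{2}{3}\sigma_{\min}(\H^\star)$ and $\sigma_{\max}(\H_t) \leq \tfrac{4}{3}\sigma_{\max}(\H^\star)$. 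Consequently,
\begin{align*}
\sqrt{\frac{\sigma_{\max}(\H_t)}{\sigma_{\min}(\H_t)}} \;\leq\; \sqrt{2\kappa} \;\leq\; 2\sqrt{\kappa},
\qquad
\frac{2L}{\sigma_{\min}(\H_t)} \;\leq\; \frac{3L}{\sigma_{\min}(\H^\star)}.
\end{align*}

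The second step is just substitution: plug the two inequalities above into the bound of Theorem~\ref{thm:general}, namely
\[
\|\De_{t+1}\|_2 \leq \max\Bigl\{\alpha\sqrt{\sigma_{\max}(\H_t)/\sigma_{\min}(\H_t)}\,\|\De_t\|_2,\; (2L/\sigma_{\min}(\H_t))\,\|\De_t\|_2^2\Bigr\},
\]
to obtain exactly the claimed bound with constants $2\alpha\sqrt{\kappa}$ and $3L/\sigma_{\min}(\H^\star)$. The probability $1-\delta$ is inherited from Theorem~\ref{thm:general} since the perturbation argument is deterministic given Assumptions~\ref{assumption:lipschitz} and~\ref{assumption:close}.

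There is no real obstacle here: the argument is essentially a one-step corollary whose only content is Weyl's inequality plus Hessian Lipschitzness. The mildly delicate part is pinning down the constants so that they match the corollary's statement; the factor of $2$ in $2\alpha\sqrt{\kappa}$ comes from bounding $\sqrt{2} \leq 2$, and the factor of $3$ in $3L/\sigma_{\min}(\H^\star)$ comes from the $2/3$ lower bound on $\sigma_{\min}(\H_t)$. No iteration-dependent quantities remain, which is precisely the point of the corollary.
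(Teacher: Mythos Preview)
Your proposal is correct and follows essentially the same route as the paper: use the Lipschitz Hessian assumption to get $\|\H_t-\H^\star\|_2\leq L\|\De_t\|_2$, apply Weyl's inequality to bound $\sigma_{\min}(\H_t)$ and $\sigma_{\max}(\H_t)$ in terms of those of $\H^\star$, then substitute into the bound of Theorem~\ref{thm:general}. The paper derives the same two inequalities $\sigma_{\max}(\H_t)/\sigma_{\min}(\H_t)\leq 2\kappa$ and $L/\sigma_{\min}(\H_t)\leq \tfrac{3}{2}L/\sigma_{\min}(\H^\star)$ (also reading Assumption~\ref{assumption:close} as $\|\De_t\|_2\leq \sigma_{\min}(\H^\star)/(3L)$), and plugs them into the slightly sharper intermediate inequality \eqref{eq:proof:2} rather than the theorem statement, but the arithmetic and the resulting constants are identical to yours.
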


\subsection{Inexact Solutions to Local Sub-Problems} \label{sec:converge:inexact}

In the $t$-th iteration, the $i$-th worker locally computes $\tilde{\pp}_{t,i}$ by solving 
$\widetilde{\H}_{t,i} \pp = \g_t$, where $\widetilde{\H}_{t,i}$ is the $i$-th local Hessian matrix defined in \eqref{eq:def_local_hessian}.
In high-dimensional problems, say $d \geq 10^4$, the exact formation of $\widetilde{\H}_{t,i} \in \RB^{d\times d}$ and its inversion are impractical. 
Instead, we could employ iterative linear system solvers, such as CG, to inexactly solve the arising linear system in \eqref{eq:ANT}.
Let $\tilde{\pp}_{t,i}' $ be an inexact solution which is close to $\tilde{\pp}_{t,i} \triangleq \widetilde{\H}_{t,i}^{-1} \g_t$, in the sense that
\begin{small}
\begin{align} \label{eq:inexact_assumption}
	\Big\| \widetilde{\H}_{t,i}^{1/2} \,
	\big( \tilde{\pp}_{t,i}' - \tilde{\pp}_{t,i} \big) \Big\|_2
	\; \leq \; \frac{{\epsilon}_0}{2} 
	\Big\| \widetilde{\H}_{t,i}^{1/2} \, \tilde{\pp}_{t,i}  \Big\|_2 ,
\end{align}
\end{small}%
for some ${{\epsilon}}_0 \in (0, 1)$. GIANT then takes $\tilde{\pp}_t' = \frac{1}{m} \sum_{i=1}^m \tilde{\pp}_{t,i}'$
as the approximate Newton direction in lieu of $\tilde{\pp}_t$. 
In this case, as long as ${{\epsilon}}_0$ is of the same order as $\frac{\textcolor{OliveGreen}{\eta}}{\sqrt{ m }} + {\textcolor{OliveGreen}{\eta}}^2$,
the convergence rate of such inexact variant of GIANT remains similar to the exact algorithm in which the local linear system is solved exactly. Theorem~\ref{thm:inexact} makes convergence properties of inexact GIANT more explicit.

\begin{theorem} \label{thm:inexact} 
	Suppose inexact local solution to \eqref{eq:ANT}, denote $\tilde{\pp}_{t,i}'$, satisfies \eqref{eq:inexact_assumption}.
	Then Theorems~\ref{thm:quadratic} and \ref{thm:general} and Corollary~\ref{cor:general} all continue to hold with 
	$\alpha =  \big( \frac{{\textcolor{OliveGreen}{\eta}} }{ \sqrt{ m }}  + {\textcolor{OliveGreen}{\eta}}^2 \big) + \epsilon_0$.
\end{theorem}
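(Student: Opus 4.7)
The plan is to treat the inexactness as an additive perturbation of the exact GIANT direction and show that, in the appropriate Hessian-norm, this perturbation contributes at most $\epsilon_0 \|\De_t\|_{\H_t}$ to the per-iteration error bound. Concretely, write $\tilde{\pp}_t' - \tilde{\pp}_t = \frac{1}{m}\sum_{i=1}^m (\tilde{\pp}_{t,i}' - \tilde{\pp}_{t,i})$, so that $\w_{t+1}' - \w^\star = (\w_t - \tilde{\pp}_t - \w^\star) - (\tilde{\pp}_t' - \tilde{\pp}_t)$. The first summand is exactly the error analyzed in Theorems~\ref{thm:quadratic} and~\ref{thm:general}, for which the rate $\alpha = \eta/\sqrt{m} + \eta^2$ has already been established. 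A triangle inequality in $\|\cdot\|_{\H_t}$ (or $\|\cdot\|_2$, after converting through the extremal eigenvalues of $\H_t$) then yields the claim provided the inexactness perturbation is bounded by $\epsilon_0 \|\De_t\|_{\H_t}$.

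First I would control the local inexactness. By hypothesis~\eqref{eq:inexact_assumption}, using $\widetilde{\H}_{t,i}^{1/2} \tilde{\pp}_{t,i} = \widetilde{\H}_{t,i}^{-1/2} \g_t$, we get
\begin{align*}
\bigl\| \tilde{\pp}_{t,i}' - \tilde{\pp}_{t,i} \bigr\|_{\widetilde{\H}_{t,i}}
\; \leq \; \tfrac{\epsilon_0}{2} \bigl\| \g_t \bigr\|_{\widetilde{\H}_{t,i}^{-1}} .
\end{align*}
The concentration argument used in the proofs of Theorems~\ref{thm:quadratic} and~\ref{thm:general} (and which is implicit in the choice of $s$ in terms of $\mu$ and $\eta$) guarantees that with probability $1-\delta$, each local Hessian is a spectral $\eta$-approximation, i.e.\ $(1-\eta)\H_t \preceq \widetilde{\H}_{t,i} \preceq (1+\eta) \H_t$ for all $i$. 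Substituting this into the above inequality converts every norm to the $\H_t$- or $\H_t^{-1}$-norm, at the cost of a multiplicative factor of the form $\sqrt{(1+\eta)/(1-\eta)}$, which is a $1+O(\eta)$ perturbation that is absorbed into the constant in front of $\epsilon_0$.

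Next I would aggregate across workers. By convexity of the squared $\H_t$-norm (or simply a triangle inequality followed by Jensen),
\begin{align*}
\bigl\| \tilde{\pp}_t' - \tilde{\pp}_t \bigr\|_{\H_t}
\; \leq \; \tfrac{1}{m}\sum_{i=1}^m \bigl\| \tilde{\pp}_{t,i}' - \tilde{\pp}_{t,i} \bigr\|_{\H_t}
\; \leq \; \tfrac{\epsilon_0}{2}\, (1+O(\eta)) \bigl\| \g_t \bigr\|_{\H_t^{-1}} .
\end{align*}
For the quadratic case of Theorem~\ref{thm:quadratic} we have $\g_t = \H \De_t$ exactly, so $\|\g_t\|_{\H^{-1}} = \|\De_t\|_{\H}$; for the general case of Theorem~\ref{thm:general}, the Lipschitz Hessian assumption gives $\g_t = \H_t \De_t + O(L\|\De_t\|_2^2)$, and the higher-order term is already subsumed in the quadratic term $\frac{2L}{\sigma_{\min}(\H_t)} \|\De_t\|_2^2$ of the conclusion.

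Combining, $\|\w_{t+1}' - \w^\star\|_{\H_t} \leq \alpha_{\text{exact}} \|\De_t\|_{\H_t} + \epsilon_0 \|\De_t\|_{\H_t}$ (modulo $1+O(\eta)$ factors absorbed into $\epsilon_0$), which after converting $\|\cdot\|_{\H_t}$ back to $\|\cdot\|_2$ introduces the same $\sqrt{\kappa}$ factor that appears in Theorem~\ref{thm:quadratic} and the same $\sqrt{\sigma_{\max}(\H_t)/\sigma_{\min}(\H_t)}$ factor in Theorem~\ref{thm:general}. Corollary~\ref{cor:general} then follows from its analog in the exact case via the same Lipschitz argument bounding $\kappa(\H_t)$ near $\w^\star$ by $2\kappa(\H^\star)$. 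The main subtlety I expect is bookkeeping the constants in the spectral-equivalence conversion so that the extra factor really does cleanly add to $\alpha$ rather than multiply it; this is where one must be careful to use the quadratic-form inequality \eqref{eq:inexact_assumption} in exactly the norm that the existing proofs use to measure $\tilde{\pp}_t - \H_t^{-1}\g_t$, so that the two errors combine additively by a single triangle inequality rather than compounding.
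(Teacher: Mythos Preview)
Your proposal is correct and uses the same ingredients as the paper: the spectral equivalence $(1-\eta)\H_t \preceq \widetilde{\H}_{t,i} \preceq (1+\eta)\H_t$ to convert~\eqref{eq:inexact_assumption} into an $\H_t$-norm bound, averaging over $i$, and a triangle inequality to combine additively with the exact-case error. The paper organizes this slightly differently: it packages the perturbation analysis as a standalone Lemma~\ref{lem:avg2}, proving
\[
\big\|\H_t^{1/2}(\tilde{\pp}_t' - \pp_t^\star)\big\|_2 \;\le\; \Big(\vartheta\tfrac{\eta}{\sqrt m}+\vartheta\tfrac{\eta^2}{1-\eta}+\tfrac{\epsilon_0}{1-\eta}\Big)\,\big\|\H_t^{1/2}\pp_t^\star\big\|_2,
\]
which is equivalent to $\phi_t(\tilde{\pp}_t')\le(1-\alpha'^2)\min_\pp\phi_t(\pp)$, and then applies the convergence Lemma~\ref{lem:convergence} \emph{once} with $\alpha'$ in place of $\alpha$. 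This is exactly the refinement you flag in your last paragraph: the triangle inequality is taken on the \emph{direction} error $\tilde{\pp}_t'-\pp_t^\star$ rather than on the \emph{iterate} error $\De_{t+1}'$, so that the non-quadratic term $L\|\De_t\|_2^2\|\De_{t+1}\|_2$ from Lemma~\ref{lem:convergence} arises only once and with the correct $\De_{t+1}'$ on the right-hand side. Your initial decomposition $\De_{t+1}'=\De_{t+1}-(\tilde{\pp}_t'-\tilde{\pp}_t)$, treating Theorems~\ref{thm:quadratic}--\ref{thm:general} as black boxes, works cleanly in the quadratic case (where $\De_{t+1}=\pp_t^\star-\tilde{\pp}_t$ exactly), but in the general case it would leave $\|\De_{t+1}\|_2$ (the exact iterate) inside the quadratic term and require an extra round of norm conversions; the direction-level organization you anticipate avoids this.
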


Proposition~\ref{cor:cg_q} gives conditions to guarantee \eqref{eq:inexact_assumption}, which is, in turn, required for Theorem~\ref{thm:inexact}. 

\begin{proposition} \label{cor:cg_q}
	To compute an inexact local Newton direction from the sub-problem~\eqref{eq:ANT}, suppose 
	each worker performs
	\begin{small}
	\[
	q \; = \;
	\log \tfrac{ 8}{\epsilon_0^2} 
	\; \big/ \; 
	\log \tfrac{ \sqrt{{\tilde\kappa}_t} + 1 }{\sqrt{{\tilde\kappa}_t} - 1} 
	\; \approx \; \tfrac{ \sqrt{\kappa_t } - 1}{ 2 } \log \tfrac{ 8 }{\epsilon_0^2}
	\]
\end{small}%
	iterations of CG, initialized at zero, where ${\tilde\kappa}_t$ and $\kappa_t$ are, respectiely, the condition number of $\widetilde\H_{t,i}$ and $\H_t$. Then requirement \eqref{eq:inexact_assumption} is satisfied.
\end{proposition}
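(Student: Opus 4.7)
The plan is to apply the classical CG convergence bound in the energy norm induced by $\widetilde{\H}_{t,i}$, and then simply invert the resulting inequality to solve for the number of iterations $q$ needed to meet \eqref{eq:inexact_assumption}.

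First I would set up the notation. Denote the linear system \eqref{eq:ANT} as $\widetilde{\H}_{t,i}\,\pp = \g_t$, whose exact solution is $\tilde{\pp}_{t,i}$. Run CG initialized at $\pp^{(0)}=\0$, and let $\tilde{\pp}_{t,i}' = \pp^{(q)}$ be the iterate after $q$ steps. Observe that the left-hand side of \eqref{eq:inexact_assumption} is precisely the energy-norm error $\|\pp^{(q)}-\tilde{\pp}_{t,i}\|_{\widetilde{\H}_{t,i}}$, and likewise the right-hand side scales $\|\tilde{\pp}_{t,i}\|_{\widetilde{\H}_{t,i}}$. Since $\pp^{(0)}=\0$, the initial energy-norm error is exactly $\|\tilde{\pp}_{t,i}\|_{\widetilde{\H}_{t,i}}$.

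Next I would invoke the standard CG convergence estimate (see, e.g., Nocedal and Wright~\cite{nocedal2006numerical}), which states that for CG applied to a symmetric positive definite system with condition number $\tilde\kappa_t$,
\begin{align*}
\big\| \pp^{(q)} - \tilde{\pp}_{t,i} \big\|_{\widetilde{\H}_{t,i}}
\;\le\; 2 \Big( \tfrac{\sqrt{\tilde\kappa_t}-1}{\sqrt{\tilde\kappa_t}+1} \Big)^{q}
\big\| \tilde{\pp}_{t,i} \big\|_{\widetilde{\H}_{t,i}}.
\end{align*}
Plugging this into \eqref{eq:inexact_assumption}, it suffices to ensure
\begin{align*}
2\Big( \tfrac{\sqrt{\tilde\kappa_t}-1}{\sqrt{\tilde\kappa_t}+1} \Big)^{q} \;\le\; \tfrac{\epsilon_0}{2},
\end{align*}
which after taking logarithms and rearranging yields $q \;\ge\; \log(8/\epsilon_0^2)\big/\big(2\log\tfrac{\sqrt{\tilde\kappa_t}+1}{\sqrt{\tilde\kappa_t}-1}\big)$ when one works with the squared form, or equivalently the expression stated in the proposition after absorbing the factor of $2$ in a slightly loose way. (The same final bound, up to a constant factor in the numerator, arises from the squared energy-norm formulation $4(\cdot)^{2q}\le \epsilon_0^2/4$.)

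Finally I would handle the ``$\approx$'' part of the statement: for any $\tilde\kappa_t>1$ one has the elementary identity
\begin{align*}
\log \tfrac{\sqrt{\tilde\kappa_t}+1}{\sqrt{\tilde\kappa_t}-1} \;=\; \log\!\Big( 1 + \tfrac{2}{\sqrt{\tilde\kappa_t}-1}\Big) \;\approx\; \tfrac{2}{\sqrt{\tilde\kappa_t}-1},
\end{align*}
valid when $\sqrt{\tilde\kappa_t}$ is not too small, so that $1/\log\tfrac{\sqrt{\tilde\kappa_t}+1}{\sqrt{\tilde\kappa_t}-1}\approx (\sqrt{\tilde\kappa_t}-1)/2$. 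To pass from $\tilde\kappa_t$ to $\kappa_t$, I would invoke the spectral approximation $(1-\eta)\H_t\preceq\widetilde{\H}_{t,i}\preceq(1+\eta)\H_t$ established earlier in the paper from the coherence-based matrix-Chernoff argument; this implies $\tilde\kappa_t \le \tfrac{1+\eta}{1-\eta}\kappa_t$, so replacing $\tilde\kappa_t$ by $\kappa_t$ costs only a constant factor that is hidden in the ``$\approx$''. There is no real obstacle here — the proof is essentially a one-line invocation of the textbook CG bound — and the only care needed is in tracking the constant inside the logarithm and verifying that the energy-norm form of the CG bound aligns exactly with the norm appearing in \eqref{eq:inexact_assumption}.
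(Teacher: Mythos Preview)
Your proposal is correct and follows essentially the same route as the paper: invoke the standard CG energy-norm convergence bound, set the right-hand side equal to $\epsilon_0^2/4$ (the paper writes the squared form with a coefficient $2$ rather than $4$, which is why your constant tracking looks slightly off), solve for $q$, and then use the spectral sandwich $(1-\eta)\H_t\preceq\widetilde{\H}_{t,i}\preceq(1+\eta)\H_t$ to replace $\tilde\kappa_t$ by $\kappa_t$. Your derivation of the ``$\approx$'' via $\log(1+x)\approx x$ is in fact more explicit than the paper's, which simply asserts the approximation.
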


\section{Experiments} \label{sec:exp}

Our experiments are conducted on logistic regression with $\ell_2$ regularization, i.e.,
\vspace{-2mm}
\begin{align} 
\label{eq:def:logis}
\min_{\w} \frac{1}{n} \sum_{j=1}^n \log \big( 1 + \exp ({-y_j \x_j^T \w }) \big)
+ \frac{\gamma }{2} \| \w \|_2^2,
\end{align}
where $\x_j \in \RB^d$ is a feature vector
and $y_j \in \{-1 , +1 \}$ is the corresponding response.
For an unseen test sample $\x'$, the logistic regression makes prediction by $y' = \sgn (\w^T \x')$.

All the compared methods are implemented in Scala and Apache Spark \cite{zaharia2010spark} and experiments are conducted on the Cori Supercomputer maintained by NERSC. Cori is a Cray XC40 system with 1632 compute nodes, each of which has two 2.3GHz 16-core Haswell processors and 128GB of DRAM. The Cray Aries high-speed interconnect linking the compute nodes is configured in a dragonfly topology.

\vspace{-1mm}
\subsection{Compared Methods}

We compare GIANT with three methods: Accelerated Gradient Descent (AGD)~\cite{nesterov2013introductory}, 
Limited memory BFGS (L-BFGS)~\cite{liu1989limited}, and Distributed Approximate NEwton (DANE) \cite{shamir2014communication}.
For each method, we try different settings for their respective parameters and report the best performance.

\begin{itemize}[leftmargin=*,wide=0em, itemsep=-2pt,topsep=-2pt, label = {\bfseries -}]
	\item 
	\textbf{GIANT} has only one tuning parameter, i.e., the maximum number of CG iterations to approximately solve the local sub-problems.
	\item 
	\textbf{Accelerated Gradient Descent (AGD)}
	repeats the following two steps:
	$\v_{t+1} = \beta \v_t + \g_t$ and $\w_{t+1} = \w_t - \alpha \v_{t+1}$.
	Here $\g_t$ is the gradient, $\v_t$ is the momentum, and $\alpha > 0$ and $\beta \in [0, 1)$ are tuning parameters.
	We choose $\alpha$ and $\beta$ from $\{0.1, 1, 10, 100\}$ and $\{0.5, 0.9, 0.95, 0.99 , 0.999 \}$, respectively.
	We are aware of several variants of AGD; we just compare with one of them.
	\item 
	\textbf{L-BFGS} is a quasi-Newton method that approximates the BFGS method using a limited amount of memory.
	L-BFGS has one tuning parameter, i.e., the history size, which introduces a trade-off between the memory and computational costs as well as convergence rate.
	\item 
	\textbf{Distributed Approximate NEwton (DANE)}
	was proposed by \cite{shamir2014communication}; here we use the inexact DANE studied by \cite{reddi2016aide}.
	DANE bears a strong resemblance with GIANT:
	the local sub-problem of DANE is still a logistic regression,
	whereas the sub-problem of GIANT is a quadratic approximation to the logistic regression.
	We have tried AGD and SVRG \cite{johnson2013accelerating} to solve the corresponding sub-problem and found SVRG to perform much better than AGD.
	DANE seems to be sensitive to two parameters:
	the step size (learning rate) and the stopping criterion of SVRG.
	We choose the step size and the maximal iteration of SVRG from $\{0.1, 1, 10, 100\}$ and $\{30, 100, 300\}$, respectively.
\end{itemize}

\begin{figure*}[!t]
	\begin{center}
		\subfigure[Objective function value $f(\w_t) - f(\w^\star)$ against the wall-clock time.]{\includegraphics[width=0.9\textwidth]{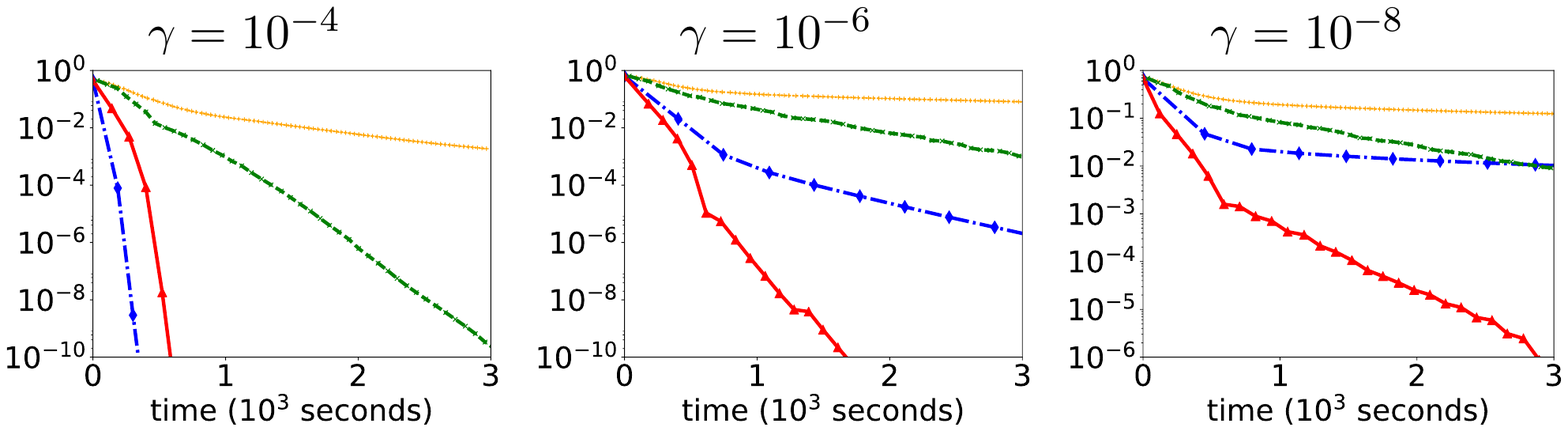}}
		\subfigure[Classification error rate (\%) on the test set against the wall-clock time.]{\includegraphics[width=0.9\textwidth]{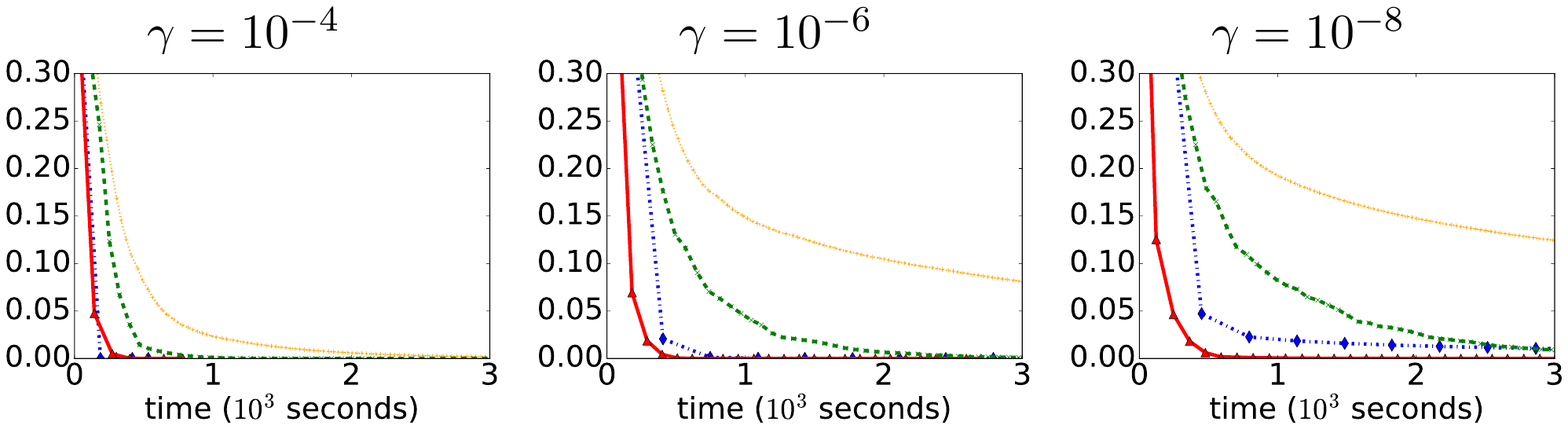}}
		\includegraphics[width=0.5\textwidth]{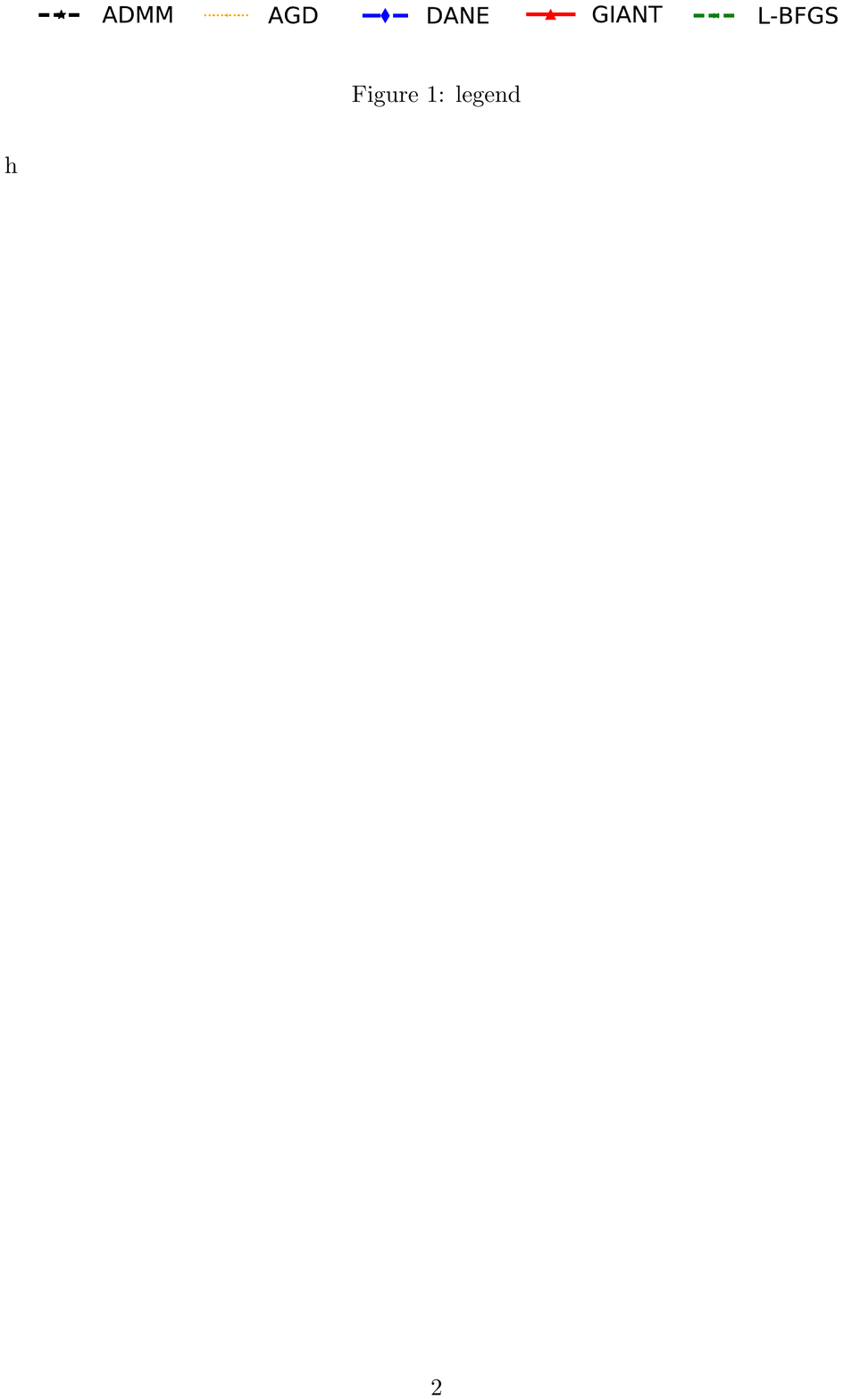}
	\end{center}
	\caption{Results on the MNIST8K data with random feature mapping ($n=8M$ and $d=10K$).
		The feature matrix is dense.
		The regularization parameter $\gamma$ is defined in~\eqref{eq:def:logis}.
		We uses $15$ compute nodes (totally $480$ cores) and partition the data to $m=89$ parts.
	}
	\label{fig:mnist}
	\vspace{-2mm}
\end{figure*}

\begin{figure*}[!t]
	\begin{center}
		\subfigure[Objective function value $f(\w_t) - f(\w^\star)$ against the wall-clock time.]{\includegraphics[width=0.9\textwidth]{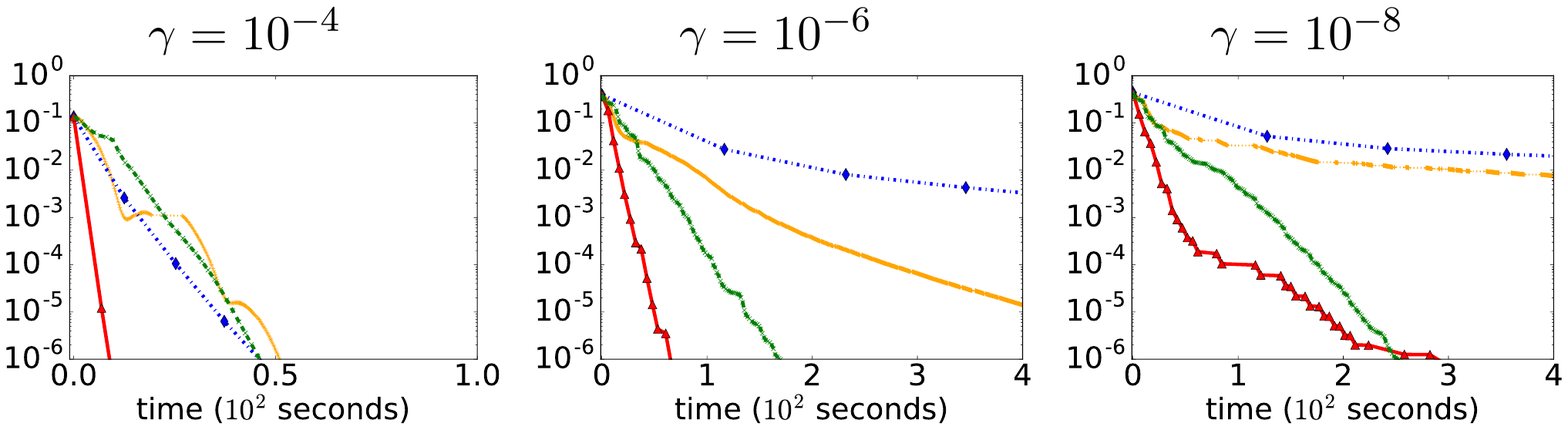}}
		\subfigure[Classification error rate (\%) on the test set against the wall-clock time.]{\includegraphics[width=0.9\textwidth]{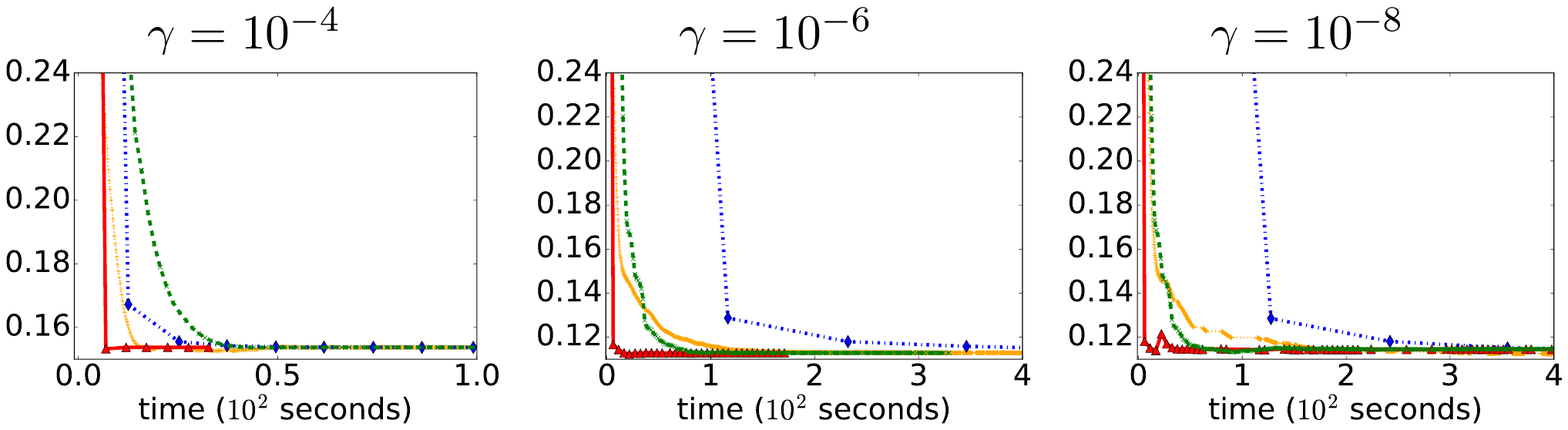}}
		\includegraphics[width=0.5\textwidth]{figure/spark/legend.pdf}
	\end{center}
	\caption{Results on the Epsilon data with random feature mapping ($n=500K$ and $d=10K$).
		The feature matrix is dense.
		The regularization parameter $\gamma$ is defined in~\eqref{eq:def:logis}.
		We uses $15$ compute nodes (totally $480$ cores) and partition the data to $m=89$ parts.
	}
	\vspace{-2mm}
	\label{fig:epsilon}
\end{figure*}

\begin{figure*}[!t]
	\begin{center}
		\subfigure[Objective function value $f(\w_t) - f(\w^\star)$ against the wall-clock time.]{\includegraphics[width=0.9\textwidth]{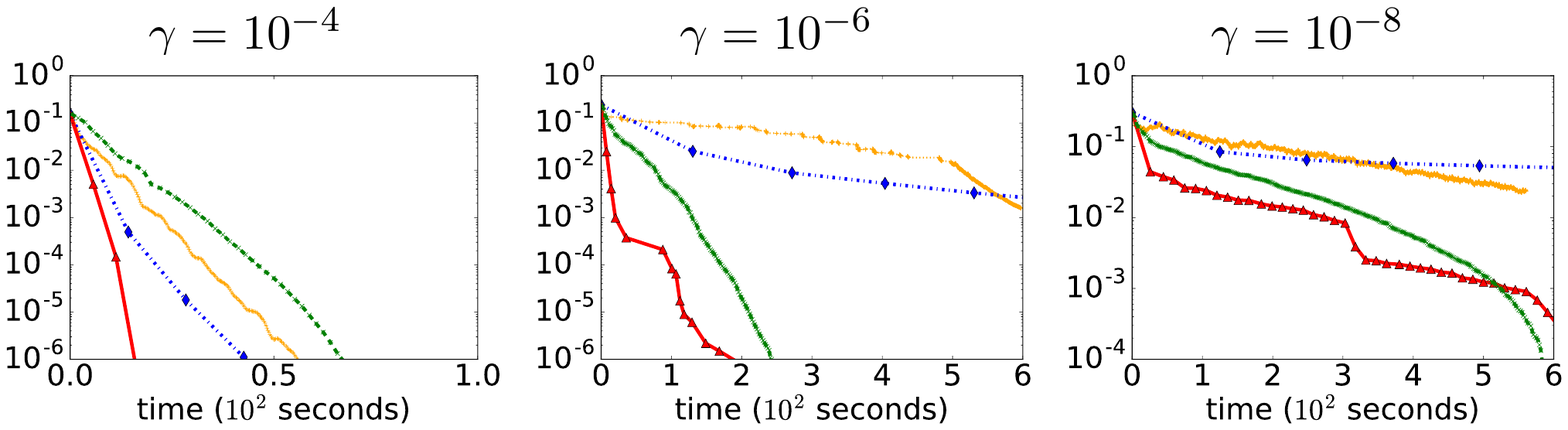}}
		\subfigure[Classification error rate (\%) on the test set against the wall-clock time.]{\includegraphics[width=0.9\textwidth]{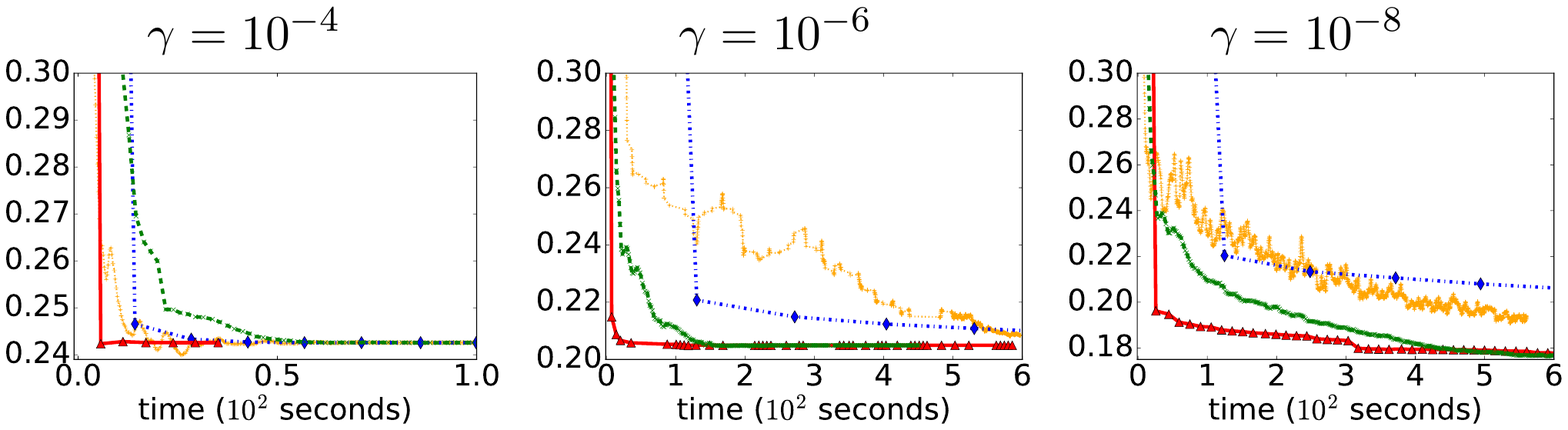}}
		\includegraphics[width=0.5\textwidth]{figure/spark/legend.pdf}
	\end{center}
	\caption{Results on the Covertype data with random feature mapping ($n=481K$ and $d=10K$).
		The feature matrix is dense.
		The regularization parameter $\gamma$ is defined in~\eqref{eq:def:logis}.
		We uses $15$ compute nodes (totally $480$ cores) and partition the data to $m=89$ parts.
	}
	\label{fig:covtype}
	\vspace{-2mm}
\end{figure*}

We {\bf do not} compare with CoCoA \cite{ma2015adding}, DiSCO \cite{zhang2013communication}, and AIDE \cite{reddi2016aide} for the following reasons.

\begin{itemize}[leftmargin=*,wide=0em, itemsep=-2pt,topsep=-2pt, label = {\bfseries -}]
	\item 
	\textbf{CoCoA.} The local sub-problems of CoCoA are the dual problems of logistic regression, which is indeed a \emph{constrained} problem and is, computationally, much more expensive than unconstrained optimization. Unfortunately, in \cite{ma2015adding}, we did not find an appropriate description of how to solve such constrained sub-problems \emph{efficiently}. 
	\item 
	\textbf{DiSCO.}
	In each iteration, each worker machine is merely charged with performing a matrix-vector multiplication, while the driver must solve a $d\times d$ linear system. When $d$ is small, DiSCO can be efficient. 
	When $ d $ is at the thousand scale, most computations are performed by the driver rather than the workers, which are mostly left idle. 
	When $d=10^4$, solving the $d\times d$ linear system on the driver machine will make DiSCO infeasible. 
	\item 
	\textbf{AIDE} is an ``accelerated'' version of DANE.
	AIDE invokes DANE as its sub-routine and has one more tuning parameter.
	However, unlike what we had expected, in all of our off-line small-scale numerical simulations, DANE consistently outperformed AIDE (both with line search).
	We believe that the Nesterov acceleration does not help make Newton-type method faster.
	Hence, we opted not to spend our limited budget of Cori CPU hours to conduct a comparison with AIDE.
\end{itemize}

\subsection{Line Search}

For the compared methods---GIANT, L-BFGS, DANE---we use the backtracking line search to determine the step size.
Specifically, let $\pp$ be a computed descending direction at $\w$, we seek to find a step size $\alpha$ that satisfies the Armijo–Goldstein condition:
\begin{eqnarray*}
	f (\w + \alpha \pp )
	\; \leq \; f (\w ) + \alpha c \big\langle \pp , \, \nabla f(\w) \big\rangle ,
\end{eqnarray*}
where $f$ is the objective function.
Throughout, we fix the control parameter to $c=0.1$ and select the step size, $\alpha$, from the candidate set $\AM = \{4^{0}, 4^{-1}, \cdots , 4^{-9}\}$.
These line search parameters are problem-independent and data-independent and do not need tuning.
According to our off-line experiments, the tuning of these parameter does not demonstrate substantial improvement to the convergence.

Line search requires two extra rounds of communications.
First, the driver \textsf{Broadcasts} $\pp \in \RB^d$ to all the worker machines, and subsequently, every worker machine (say the $i$-th) computes its local objective values $f_i (\w + \alpha \pp )$ for $\alpha$ in the set of candidate step sizes, $\AM$.
Second, the driver sums the local objective values by a \textsf{Reduce} operation and obtain $f (\w + \alpha \pp )$ for $\alpha \in \AM $.
Then the driver can locally select a step size from $\AM$ which satisfies the Armijo–Goldstein condition. 

The communication complexity of line search is $\tilde{\OM }  (d )$, which is the same as computing the gradient.
The local computational cost of line search is at most $| \AM |$ times higher than computing the gradient.

\subsection{Experiments on Time Efficiency}

We used three classification datasets: MNIST8M ($n=8M$ and $d=784$), Epsilon ($n=500K$ and $d=2K$), and Covtype ($n=581K$ and $d=54$), which are available at the LIBSVM website.
We randomly hold $80\%$ for training and the rest for test.
To increase the size of the data, we generate $10^4$ random Fourier features \cite{rahimi2007random} and use them in lieu of the original features in the logistic regression problem.
We use the RBF kernel $k (\x_i, \x_j) = \exp (- \tfrac{1}{2\sigma } \| \x_i - \x_j \|_2^2 )$ and fix $\sigma$ as
$\sigma = 
{ \sum_{i,j} \| \x_i - \x_j \|_2^2}$.

We use different settings of the regularization parameter $\gamma$, which affects the condition number of the Hessian matrix and thereby the convergence rate.
We report the results in Figures~\ref{fig:mnist}, \ref{fig:epsilon}, and~\ref{fig:covtype} which clearly demonstrate the superior performance of GIANT.
Using the same amount of wall-clock time, GIANT converges faster than AGD, DANE, and L-BFGS in terms of both training objective value and test classification error.

Our theory requires the local sample size $s = \frac{n}{m}$ to be larger than $d$.
But in practice, GIANT converges even if $s$ is smaller than $d$.
In this set of experiments, we set $m = 89$, and thus $s$ is about half of $d$.
Nevetheless, GIANT with line search converges in all of our experiments.

\begin{figure*}[!t]
	\begin{center}
		\subfigure[Objective function value $f(\w_t) - f(\w^\star)$ against the wall-clock time.]{\includegraphics[width=0.9\textwidth]{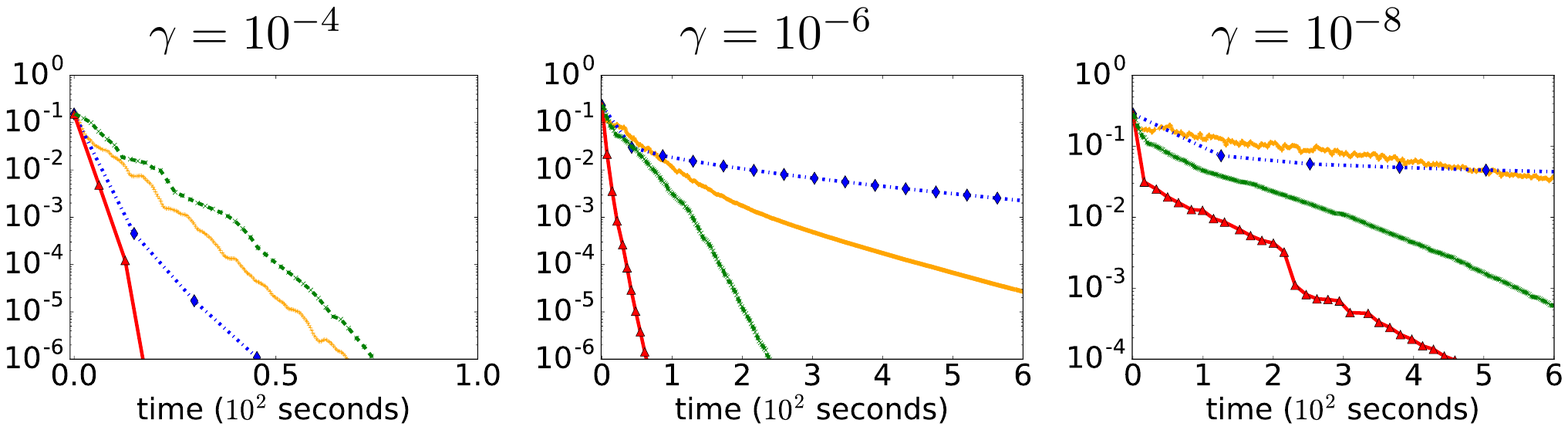}}
		\subfigure[Classification error rate (\%) on the test set against the wall-clock time.]{\includegraphics[width=0.9\textwidth]{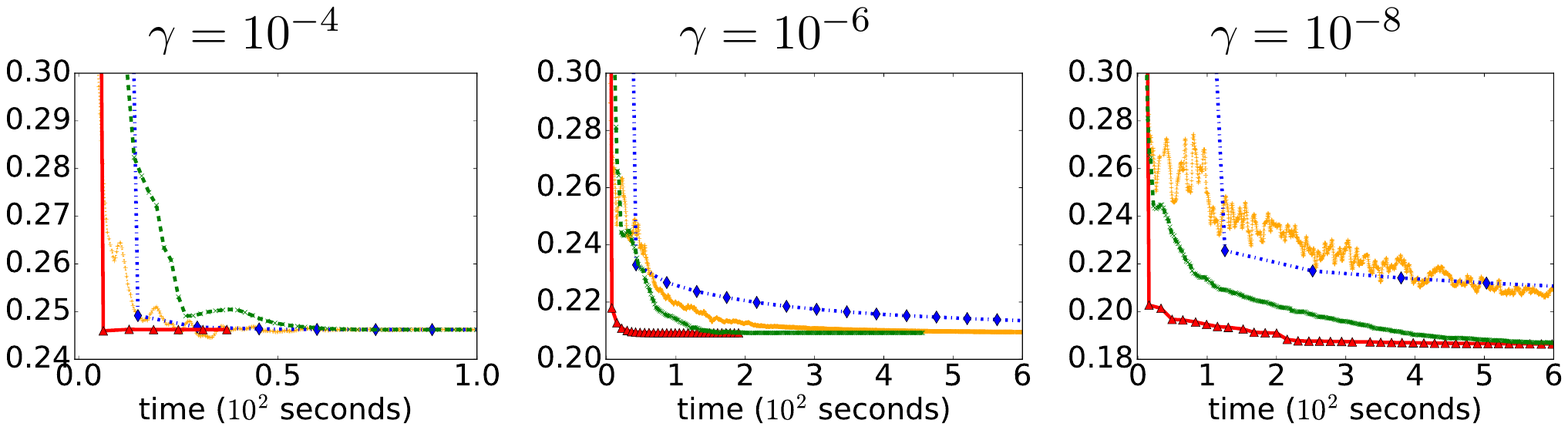}}
		\includegraphics[width=0.5\textwidth]{figure/spark/legend.pdf}
	\end{center}
	\caption{Results on the augmented Covertype data with random feature mapping ($n=2.4M$ and $d=10K$).
		The feature matrix is dense.
		The regularization parameter $\gamma$ is defined in~\eqref{eq:def:logis}.
		We uses $75$ compute nodes (totally $2400$ cores) and partition the data to $m=449$ parts.
	}
	\label{fig:covtype75}
\end{figure*}

\begin{figure*}[!t]
	\begin{center}
		\subfigure[Objective function value $f(\w_t) - f(\w^\star)$ against the wall-clock time.]{\includegraphics[width=0.9\textwidth]{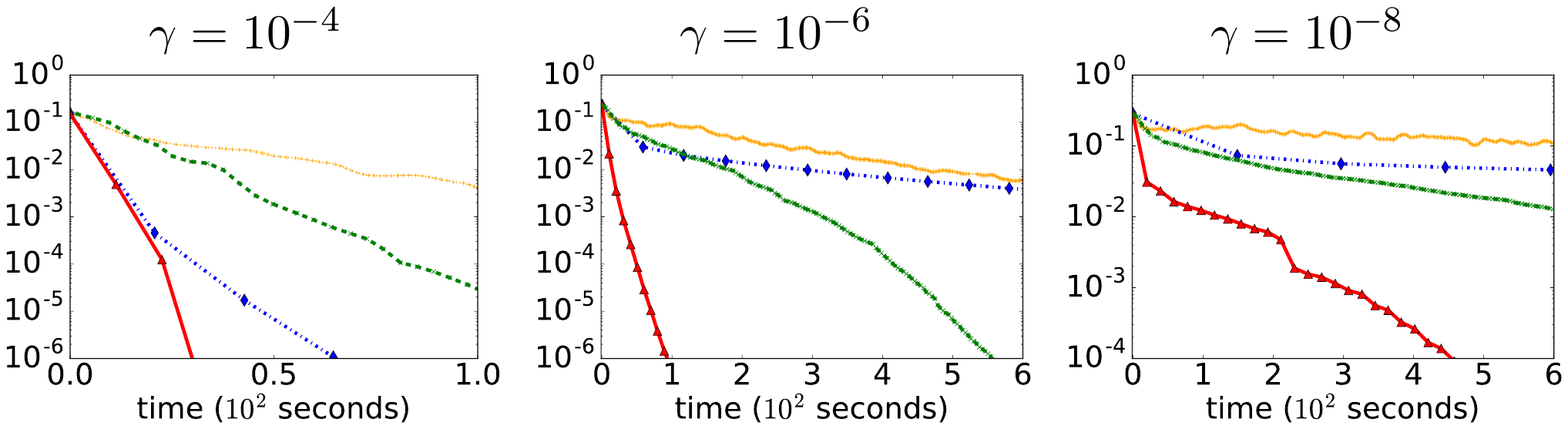}}
		\subfigure[Classification error rate (\%) on the test set against the wall-clock time.]{\includegraphics[width=0.9\textwidth]{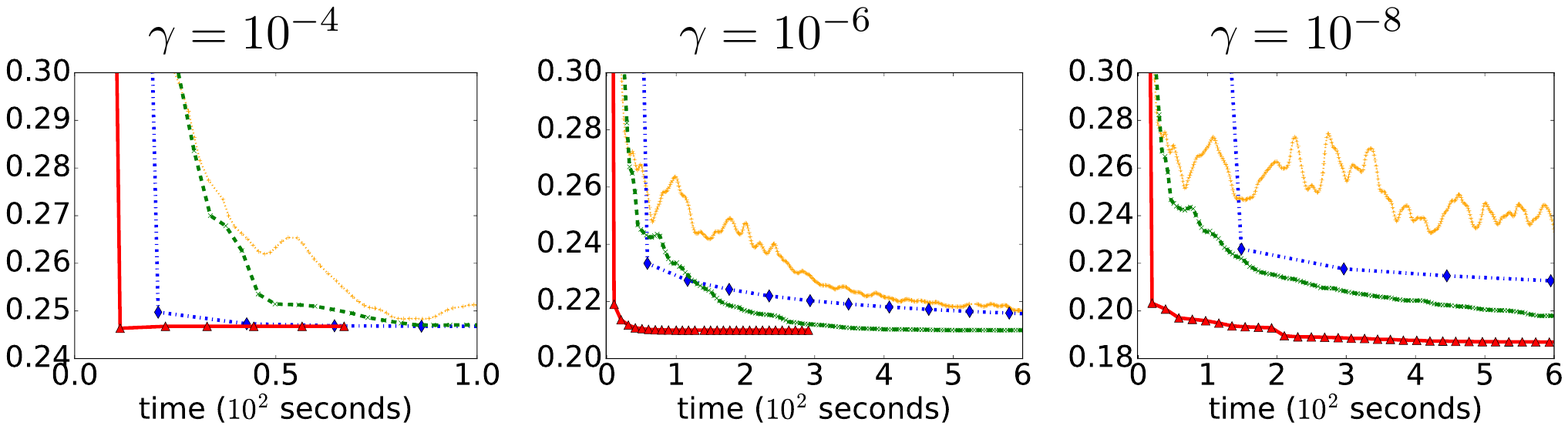}}
		\includegraphics[width=0.5\textwidth]{figure/spark/legend.pdf}
	\end{center}
	\caption{Results on the augmented Covertype data with random feature mapping ($n=12M$ and $d=10K$).
		The feature matrix is dense.
		The regularization parameter $\gamma$ is defined in~\eqref{eq:def:logis}.
		We uses $375$ compute nodes (totally $12,000$ cores) and partition the data to $m=2249$ parts.
	}
	\label{fig:covtype375}
\end{figure*}

\subsection{Experiments on Scalability}

To test the scalability of the compared methods, we increase the number of samples by a factor of $k$ by data augmentation.
We replicate $X$ and $y$ and stack them to form a $kn\times d$ feature matrix and a $kn$-dimensional label vector.
We inject i.i.d.\ Gaussian noise $\NM (0, 0.02^2)$ to every entry of the obtained feature matrix.
We do the $80\%$---$20\%$ random partition to get training and test sets and then the random feature mapping.
Because the data get $k$ times larger, we accordingly use $k$ times more compute nodes.
We set $k=5$ and report the results in Figure~\ref{fig:covtype75}; we set $k=25$ and report the results in Figure~\ref{fig:covtype375}.

Figures \ref{fig:covtype}, \ref{fig:covtype75}, and \ref{fig:covtype375} respectively show the results on the $n\times d$ dataset, the $5n\times d$ dataset, and the $25n\times d$ data. 
For the $k$ ($k=5$ or $25$) times larger data, we use $k$ times more compute nodes in order that the local computation per iteration remains the same.
For AGD and L-BFGS, the convergence of the objective function in Figure~\ref{fig:covtype75} is slower than in Figure~\ref{fig:covtype}, because with $5$ times more nodes, communication is slightly more expensive.
In contrast, the communication-efficient methods, GIANT and DANE, are almost unaffected.

\begin{figure*}[!t]
	\begin{center}
		\subfigure[AGD.]{\includegraphics[width=0.31\textwidth]{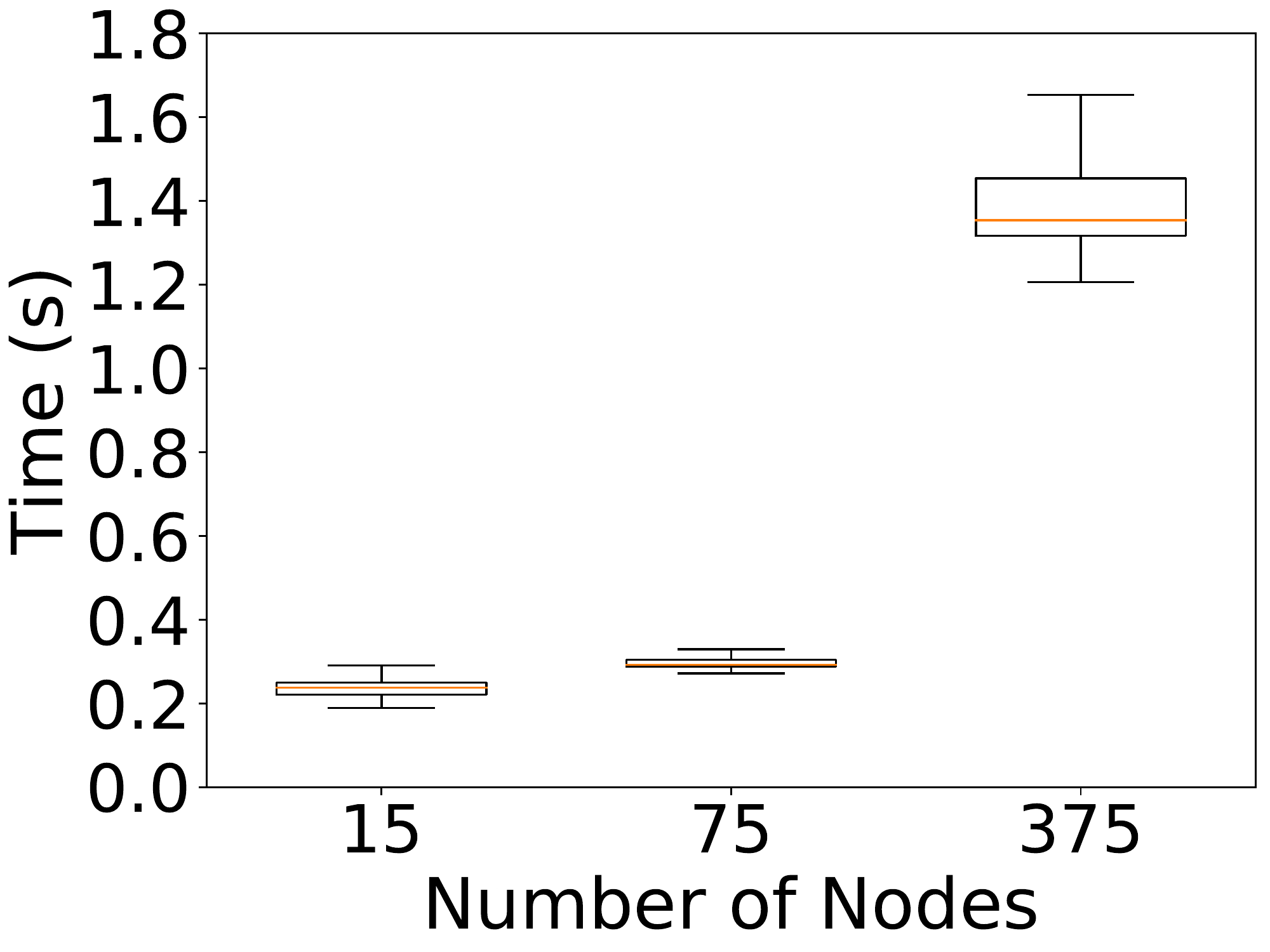}}~~
		\subfigure[L-BFGS.]{\includegraphics[width=0.31\textwidth]{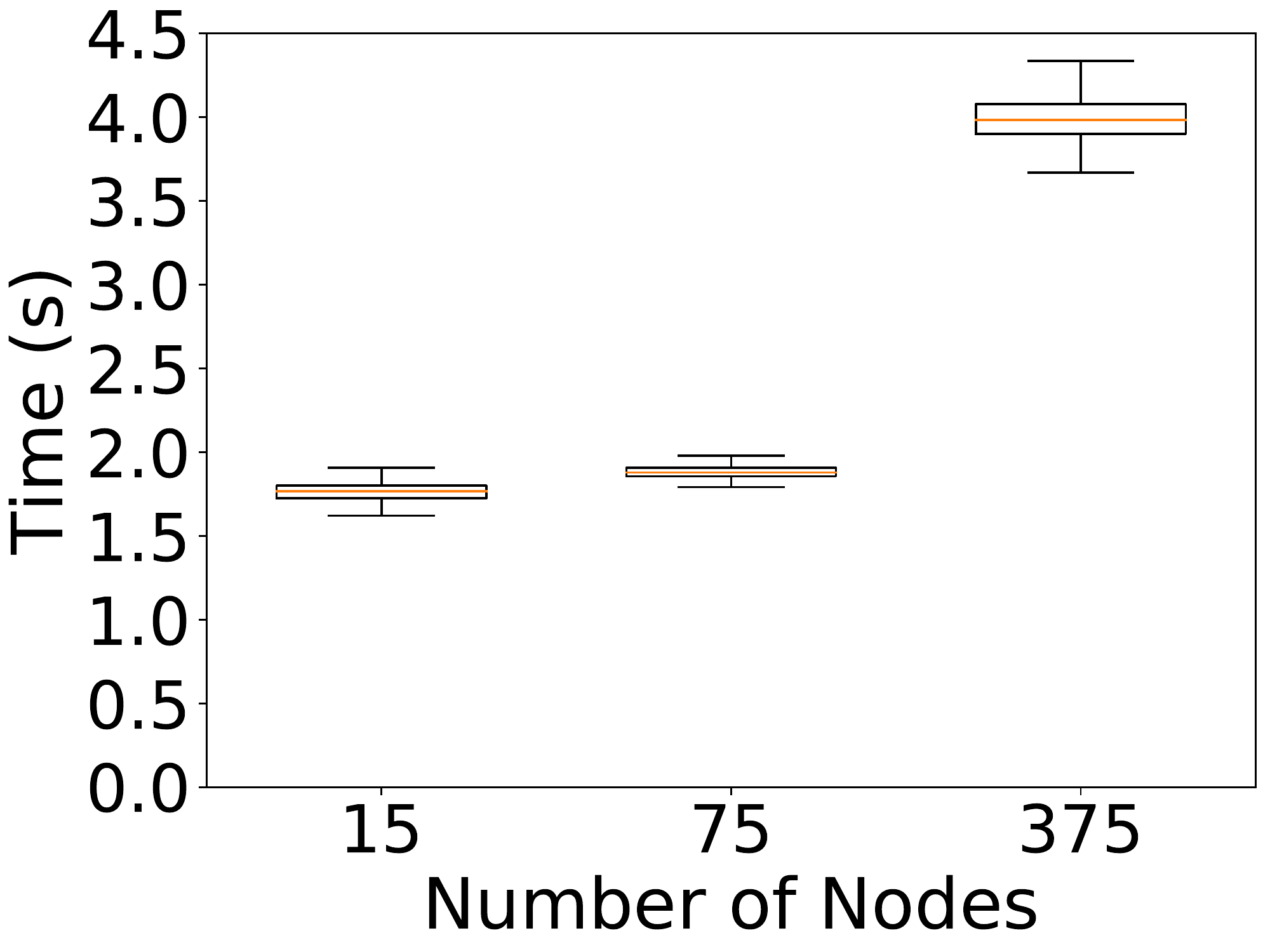}}~~
		\subfigure[GIANT.]{\includegraphics[width=0.31\textwidth]{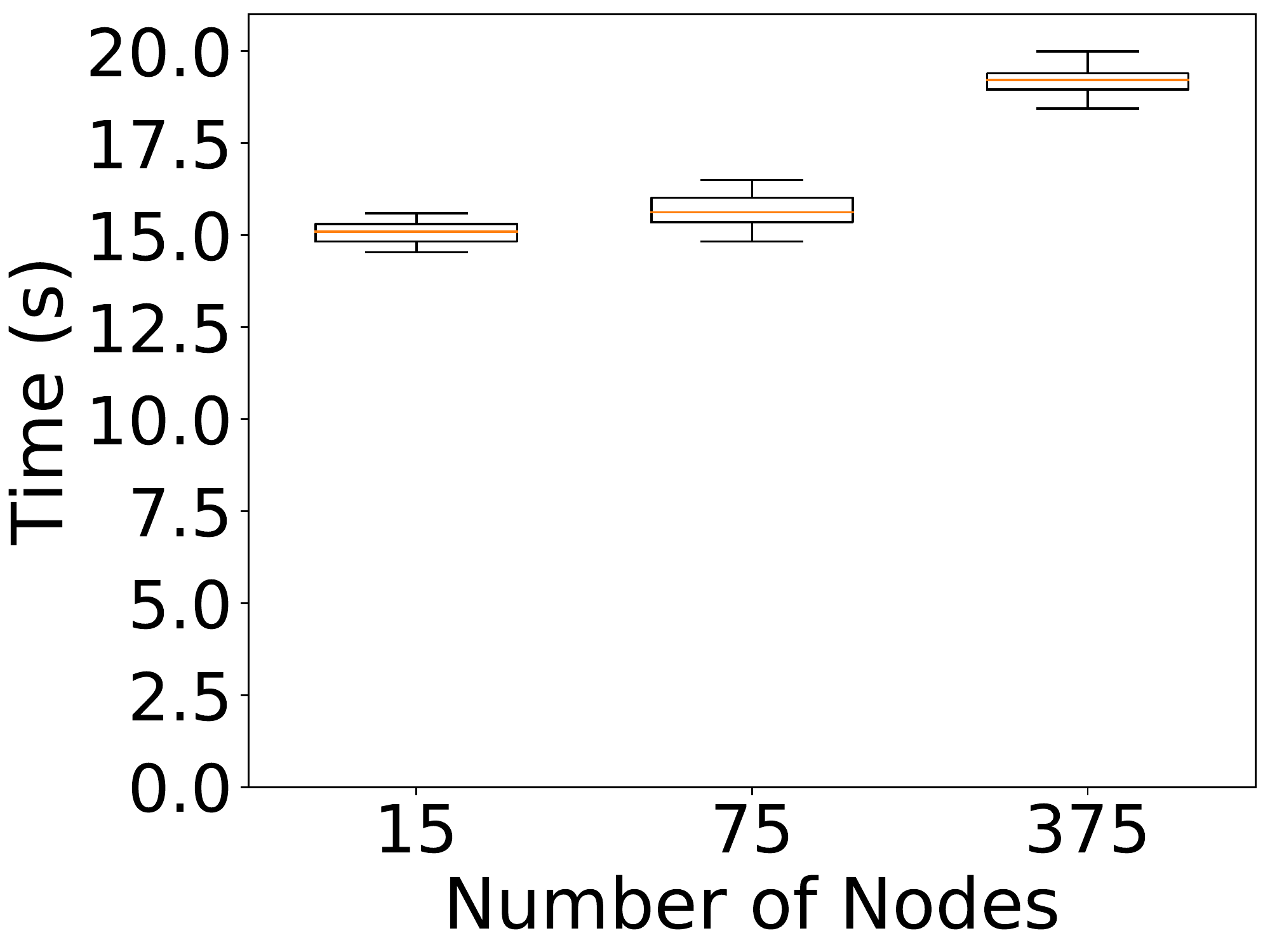}}
	\end{center}
	\caption{The per-iteration wall-clock time against the number of compute nodes.
		Here we use the Covtype dataset and set the regularization parameter $\gamma = 10^{-8}$;
		for GIANT, we set the number of CG iterations to $100$.
		As the number of nodes increases from 75 to 375, the increases of the per-iteration time are respectively $367\%$ (AGD), $112\%$ (L-BFGS), and $19\%$ (GIANT).
	}
	\label{fig:scalibility}
\end{figure*}

Now we explain why GIANT is more scalable than AGD and L-BFGS.
On the one hand, using more compute nodes, the \textsf{Broadcast} and \textsf{Reduce} of a vector from/to all the nodes become more expensive, and the straggler's effect (i.e., everyone waits for the slowest to complete) deteriorates.
In short, the communication and synchronization costs increase rapidly with the number of nodes.
On the other hand, since the size of data on each node does not vary, the per-iteration local computation remains the same.
AGD and L-BFGS are highly iterative: in each iteration, they do a little computation and 2 rounds of communication. 
Thus the per-iteration time costs of AGD and L-BFGS increase significantly with the number of nodes; see Figure~\ref{fig:scalibility}.
GIANT is computation intensive: in each iteration, GIANT does much computation and just 6 rounds of communication (including the line search); since the cost is dominated by the local computation, the increase in the communication cost only marginally affects the total runtime.


\section{Conclusions and Future Work}

We have proposed GIANT, a practical Newton-type method, for empirical risk minimization in distributed computing environments.
In comparison to similar methods, GIANT has three desirable advantages.
First, GIANT is guaranteed to converge to high precision in a small number of iterations,
provided that the number of training samples, $n$, is sufficiently large, relative to $dm$,
where $d$ is the number of features and $m$ is the number of partitions.
Second, GIANT is very communication efficient in that each iteration requires four or six rounds of communications, each with a complexity of merely $\tilde{\OM }  (d )$.
Third, in contrast to all other alternates, GIANT is easy to use, as it involves tuning one parameter.
Empirical studies also showed the superior performance of GIANT as compared several other methods.

GIANT has been developed only for unconstrained problems with smooth and strongly convex objective function. However, we believe that similar ideas can be naturally extended to {\it projected Newton} for constrained problems, {\it proximal Newton} for non-smooth regularization, and {\it trust-region method} for nonconvex problems. However, strong convergence bounds of the extensions appear nontrivial and will be left for future work.


\section*{Acknowledgement}
	We thank Kimon Fountoulakis, Jey Kottalam, Hao Ren, Sathiya Selvaraj, Zebang Shen, and Haishan Ye for their helpful suggestions.

\appendix

\section{Proof of the Main Results} \label{sec:proof}

In this section we prove the main results: Theorems~\ref{thm:quadratic}, \ref{thm:general}, \ref{thm:inexact}
and Corollary~\ref{cor:general}.
In Section~\ref{sec:proof:notation} we first define the notation used throughout.
In Sections~\ref{sec:proof:avg} to \ref{sec:proof:newton} we establish or cite several lemmas
which are applied to prove our main theorems.
Lemmas~\ref{lem:avg}, \ref{lem:avg2}, and \ref{lem:convergence} are new and may have independent interest beyond the scope of this paper.
In Section~\ref{sec:proof:theorem} we complete the proof of the main theorems and corollaries.

Here we consider a problem more general than \eqref{eq:problem}:
\begin{small}
	\begin{eqnarray} \label{eq:problem2}
	\min_{\w \in \RB^d } \;
	\bigg\{  f (\w )
	\; \triangleq \; 
	\frac{1}{n} \sum_{j=1}^n  \ell_j (\w^T \x_j) + \tfrac{1}{2} \w^T \M \w \bigg\} .
	\end{eqnarray}
\end{small}%
Here $\M$ is symmetric positive semi-definite and can be set to all-zero matrix (only if the loss function is strongly convex),
$\gamma \I_d$ for some $\gamma \geq 0$, 
a diagonal matrix with non-negative entries,
or some sparse matrix which can be efficiently transferred across the network by message passing.

\subsection{Notation} \label{sec:proof:notation}

\paragraph{Matrix sketching.}
Here, we briefly review matrix sketching methods that are commonly used for randomized linear algebra (RLA) applications~\cite{mahoney2011ramdomized}.
Given a matrix $\A \in \RB^{n \times d}$, 
we refer to $\C = \S^T \A \in \RB^{s \times d}$ as \emph{sketch} of $\A$ 
with the \emph{sketching matrix} $\S \in \RB^{n\times s}$ (typically $s \ll n$). In many RLA applications, the rows of $\C$ are typically  made up of a randomly selected and rescaled subset of the rows of $\A$, or their random linear combinations;
the former type of sketching is called \emph{row selection} or \emph{random sampling}, and the latter is referred to as \emph{random projection}.
Such randomized sketching has emerged as a powerful primitive in RLA for dealing with large-scale matrix computation problems~\cite{mahoney2011ramdomized,drineas2016randnla}.
This is mainly due to the fact that sketching, if done right, allows for large matrices to be ``represented'' by smaller alternatives which are more amenable to efficient computations and storage, while provably retaining certain desired properties of the original matrices~\cite{mahoney2011ramdomized,woodruff2014sketching}.

We consider matrix multiplication formulation of \emph{row selection} in which the sketched matrix, $\C \in \RB^{s \times d}$, is constructed using a randomly sampled and particularly rescaled subset of the rows of $\A \in \RB^{n \times d}$.
More specifically, let $p_1 , \cdots , p_n \in (0, 1)$ be the sampling probabilities associated with the rows of $\A$ (so that, in particular, $\sum_{i=1}^n p_i = 1$). 
The rows of the sketch are selected independently and according to the sampling distribution $ \{p_{i}\}_{i=1}^{n} $ such that we have
\begin{align*}
\PB \big(\c_i = \a_j / \sqrt{s p_j} \big) = p_j, \quad
\textrm{ for all } \; j = 1,2,\ldots,n,
\end{align*}
where $ \c_i $ and $ \a_j $ are $ i^{th} $ and $ j^{th} $ rows of $ \C $ and $ \A $, respectively.
As a result, the sketching matrix $\S \in \RB^{n \times s}$
contains exactly one non-zero entry in each column, 
whose position and magnitude correspond to the selected row of $\A$. 
\emph{Uniform sampling} is a particular form of row sampling with $p_1 = \cdots = p_n = 1/n$, while \emph{leverage score sampling} takes $p_i$ proportional to the $i$-th leverage score of $\A$ for $i\in[n]$ 
(or its randomized approximation~\cite{drineas2012fast}).

\emph{Random projection} forms a sketch by taking random linear combinations of the rows of $\A$.
Popular random projections include, among many others, 
Gaussian projection \cite{johnson1984extensions},
subsampled randomized Hadamard transform~\cite{drineas2011faster,lu2013faster,tropp2011improved},
Rademacher random variables~\cite{achlioptas2003database},
CountSketch~\cite{clarkson2013low,meng2013low,nelson2013osnap}.

\paragraph{The quadratic function $\phi$.}
Let $\A_t$ be defined in \eqref{eq:def:at}.
We define the auxiliary function
\begin{eqnarray} \label{eq:def:phi}
\phi_t (\pp) \: \triangleq \:
\tfrac{1}{2} \pp^T \underbrace{(\A_t^T \A_t + \M )}_{\triangleq \H_t} \pp -  \pp^T \g_t
\end{eqnarray}
and study its approximate solutions in this section.
The Newton direction at $\w_t$ can be written as
\begin{eqnarray*} 
	\pp_t^\star
	\; = \;
	\argmin_{\pp}
	\, \phi_t (\pp) 
	\; = \; (\A_t^T \A_t + \M )^{-1} \g_t .
\end{eqnarray*}
Let $\S_1 , \cdots , \S_m$ be some sketching matrices.
By definition, the local and global approximate Newton directions are respectively
\begin{eqnarray} \label{eq:def:pt}
\tilde{\pp }_{t,i} 
\; = \;(\A_t^T \S_i \S_i^T \A_t + \M )^{-1} \g_t
\qquad \textrm{and} \qquad
\tilde{\pp}_t
\; = \; \frac{1}{m} \sum_{i=1}^m \tilde{\pp }_{t,i} .
\end{eqnarray}
It can be verified that
$\tilde{\pp}_{t,i}$ is the minimizer of the sketched problem
\begin{eqnarray} \label{eq:def:tilde_phi}
\tilde{\phi}_{t,i} (\pp) 
\: \triangleq \:
\tfrac{1}{2} \pp^T \underbrace{(\A_t^T \S_i \S_i^T \A_t + \M )}_{\triangleq \widetilde{\H}_{t,i}} \pp
-  \pp^T \g_t  .
\end{eqnarray}
We will show that $\tilde{\pp}_t$ is close to $\pp_t^\star$ in terms of the value of the function $\phi_t (\cdot )$.
This is the key to the convergence analysis of GIANT.

\paragraph{Singular value decomposition (SVD).}
Let $\A \in \RB^{n\times d}$ and $\rho = \rk (\A)$.
A (compact) singular value decomposition (SVD) is defined by 
\begin{eqnarray*}
	\textstyle{\A \; = \; \U \Si \V^T
		\; = \; \sum_{i=1}^\rho  \sigma_i \u_i \v_i^T ,}
\end{eqnarray*}
where 
$\U$, $\Si$, $\V$ are a $n\times \rho$ column-orthogonal matrix,
a $\rho\times \rho$ diagonal matrix with nonnegative entries, and a $d\times \rho$ column-orthogonal matrix, respectively.
If $\A$ is symmetric positive semi-definite (SPSD), then $\U = \V$, and this decomposition is, at times, referred to as the (reduced) eigenvalue decomposition (EVD). By convention,
singular values are ordered such that $\sigma_1 \geq \cdots \geq \sigma_\rho$.

\subsection{Analysis of Model Averaging}\label{sec:proof:avg}

Lemma~\ref{lem:avg} shows that $\phi_t(\tilde{\pp}_t)$ is close to $\phi_t(\pp_t^\star)$.
Note that $\phi_t(\tilde{\pp}_t)$ and $\phi_t(\pp_t^\star)$ are both non-positive.
The proof of Lemma~\ref{lem:avg} uses some techniques developed by \cite{wang2017sketched}.
We prove Lemma~\ref{lem:avg} in Appendix~\ref{proof:lem:avg}.

\begin{assumption}\label{assumption:avg}
	Let $\eta \in (0, 1)$ be any fixed parameter,
	$\rho = \rk (\A_t)$,
	and $\U \in \RB^{n\times \rho}$ be the orthogonal bases of $\A_t$.
	Let $\S_1 , \cdots , \S_m \in \RB^{n\times s}$ be certain sketching matrices
	and $\S = \frac{1}{\sqrt{m}} [\S_1 , \cdots , \S_m ] \in \RB^{n\times ms}$;
	here $s$ depends on $\eta$.
	Assume $\S_i$ and $\S$ satisfy
	\begin{eqnarray*}
		\big\|\U^T \S_i \S_i^T \U - \I_\rho \big\|_2 \leq \eta
		\quad \textrm{for all} \; i \in [m] 
		\qquad \textrm{and} \qquad 
		\big\|\U^T \S \S^T \U - \I_\rho \big\|_2 \leq \tfrac{\eta}{ \sqrt{m}} .
	\end{eqnarray*}
\end{assumption}

\begin{lemma} [Exact Solution to Subproblems] \label{lem:avg}
	Let $\S_1 , \cdots , \S_m \in \RB^{n\times s}$ satisfy Assumption~\ref{assumption:avg}.
	Let $\phi_t$ be defined in \eqref{eq:def:phi}
	and $\tilde{\pp}_{t}$ be defined in \eqref{eq:def:pt}.
	It holds that
	\begin{eqnarray*}
		\min_\pp \phi_t (\pp )
		\; \leq \;
		\phi_t (\tilde{\pp}_{t} )
		\; \leq \; \big( 1 - \alpha^2 \big)
		\cdot \min_\pp \phi_t (\pp ) ,
	\end{eqnarray*}
	where $\alpha = \vartheta \big(\tfrac{\eta}{\sqrt{m}} + \tfrac{\eta^2 }{1- \eta } \big)$
	and $\vartheta = \tfrac{ \sigma_{\max} (\A_t^T \A_t ) }{ \sigma_{\max} (\A_t^T \A_t ) + \sigma_{\min} (\M) } \leq 1$.
\end{lemma}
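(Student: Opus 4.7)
The lower bound $\min_\pp \phi_t(\pp) \leq \phi_t(\tilde{\pp}_t)$ is immediate from optimality. For the upper bound, since $\phi_t$ is a strongly convex quadratic with minimizer $\pp_t^\star = \H_t^{-1}\g_t$, the Taylor identity gives $\phi_t(\pp) - \phi_t(\pp_t^\star) = \tfrac{1}{2}\|\pp - \pp_t^\star\|_{\H_t}^2$ and $\phi_t(\pp_t^\star) = -\tfrac{1}{2}\|\pp_t^\star\|_{\H_t}^2$. So the task reduces to proving $\|\tilde{\pp}_t - \pp_t^\star\|_{\H_t} \leq \alpha \|\pp_t^\star\|_{\H_t}$; squaring then yields $\phi_t(\tilde{\pp}_t) \leq (1-\alpha^2)\phi_t(\pp_t^\star)$.

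\textbf{Main decomposition.} Write a compact SVD $\A_t = \U\Si\V^T$ and set $\E_i \triangleq \U^T\S_i\S_i^T\U - \I_\rho$, so that Assumption~\ref{assumption:avg} reads $\|\E_i\|_2 \leq \eta$ and $\|\E\|_2 \leq \eta/\sqrt{m}$ where $\E \triangleq \tfrac{1}{m}\sum_i \E_i$. A direct computation gives $\widetilde{\H}_{t,i} - \H_t = \V\Si\E_i\Si\V^T$. Using $\g_t = \H_t\pp_t^\star$ together with the resolvent identity $\widetilde{\H}_{t,i}^{-1} - \H_t^{-1} = -\H_t^{-1}(\widetilde{\H}_{t,i}-\H_t)\widetilde{\H}_{t,i}^{-1}$, I would expand
\begin{align*}
\tilde{\pp}_t - \pp_t^\star \;=\; -\H_t^{-1}\V\Si\,\E\,\Si\V^T\pp_t^\star \;+\; \H_t^{-1}\Big(\tfrac{1}{m}\sum_{i=1}^m \V\Si\E_i\Si\V^T\,\widetilde{\H}_{t,i}^{-1}\,\V\Si\E_i\Si\V^T\Big)\pp_t^\star,
\end{align*}
which cleanly splits the error into a ``leading'' term driven by the \emph{mean} sketch error $\E$ (carrying an $\eta/\sqrt{m}$ factor) and a ``second-order'' remainder where each summand carries two copies of some $\E_i$ (hence $\eta^2$).

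\textbf{Key spectral estimates.} Left-multiplying by $\H_t^{1/2}$ and setting $\Ph \triangleq \H_t^{-1/2}\V\Si$, the bound reduces to three linear-algebraic inequalities: (a) $\|\Ph\|_2^2 \leq \vartheta$; (b) $\|\Si\V^T\pp_t^\star\|_2 \leq \sqrt{\vartheta}\,\|\pp_t^\star\|_{\H_t}$; and (c) $\widetilde{\H}_{t,i}^{-1} \preceq (1-\eta)^{-1}\H_t^{-1}$. Estimates (a) and (b) follow from block-diagonalizing $\H_t$ in the orthonormal basis $[\V,\V_\perp]$: in that basis $\H_t^{-1}$ acts as $(\Si^2 + \V^T\M\V)^{-1}$ on the $\V$-block, and the Rayleigh quotient of both $\Ph^T\Ph$ and of $\A_t^T\A_t\H_t^{-1}$ is at most $\sigma_{\max}(\A_t^T\A_t)/(\sigma_{\max}(\A_t^T\A_t)+\sigma_{\min}(\M)) = \vartheta$. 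Estimate (c) follows from $\widetilde{\H}_{t,i} = \V\Si(\I+\E_i)\Si\V^T + \M \succeq (1-\eta)\H_t$. Applying (a)--(b) to the leading term bounds it by $\|\Ph\|_2\|\E\|_2\|\Si\V^T\pp_t^\star\|_2 \leq \vartheta(\eta/\sqrt{m})\|\pp_t^\star\|_{\H_t}$; applying (a)--(c) term-by-term to the second-order piece, together with $\|\Si\V^T\widetilde{\H}_{t,i}^{-1}\V\Si\|_2 \leq \vartheta/(1-\eta)$, bounds it by $\vartheta^2\eta^2/(1-\eta)\|\pp_t^\star\|_{\H_t} \leq \vartheta\eta^2/(1-\eta)\|\pp_t^\star\|_{\H_t}$. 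Triangle inequality gives $\alpha = \vartheta(\eta/\sqrt{m}+\eta^2/(1-\eta))$, and squaring closes the proof.

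\textbf{Main obstacle.} The technical heart lies in the three spectral estimates, and especially in carrying the $\vartheta$ factor through the entire argument. A naive application of operator-norm inequalities easily yields the lemma with $\vartheta$ replaced by $1$; the improvement relies on using simultaneously that (i) $\M$ sits \emph{outside} the sketch and therefore supplies a floor $(1-\eta)\H_t$ that keeps $\widetilde{\H}_{t,i}^{-1}$ under control in (c), and (ii) the $\V$-direction of $\H_t$ is damped by $\vartheta$ in both (a) and (b). Managing the interaction of $\widetilde{\H}_{t,i}^{-1}$ sandwiched between two copies of $\E_i$ in the second-order term, without losing the $\vartheta$ factor to looser bounds, is the most delicate piece of bookkeeping.
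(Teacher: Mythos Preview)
Your proof is correct and takes essentially the same approach as the paper: both reduce to an $\H_t$-norm bound on $\tilde{\pp}_t-\pp_t^\star$, split the error into a first-order piece governed by the mean sketch error (yielding $\vartheta\eta/\sqrt{m}$) and a second-order piece (yielding $\vartheta\eta^2/(1-\eta)$), and extract the $\vartheta$ factor from the spectral bound $\|\H_t^{-1/2}\A_t^T\A_t\H_t^{-1/2}\|_2\le\vartheta$; the paper packages the same decomposition as $\H^{1/2}(\tilde{\pp}_i-\pp^\star)=\Gam_i(\I+\Ups_i)\H^{1/2}\pp^\star$ with $\Gam_i=\H^{-1/2}(\H-\tilde\H_i)\H^{-1/2}$ and $\Ups_i=\H^{1/2}\tilde\H_i^{-1}\H^{1/2}-\I$, which is algebraically your resolvent expansion. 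One small caveat: your ``block-diagonalization of $\H_t$ in $[\V,\V_\perp]$'' fails for general $\M$ since the off-diagonal block $\V^T\M\V_\perp$ need not vanish; estimates (a) and (b) are still true, but they follow directly from the Rayleigh-quotient inequality $\tfrac{\x^T\A_t^T\A_t\x}{\x^T(\A_t^T\A_t+\M)\x}\le\vartheta$, and the paper handles this step by taking $\V$ to be the full $d\times d$ right-singular-vector matrix so that the identity $\H_t=\V(\Si^2+\V^T\M\V)\V^T$ holds exactly.
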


Lemma~\ref{lem:avg} requires the exact minimizer to $\tilde{\phi}_{t,i} (\cdot )$ in \eqref{eq:def:tilde_phi},
denote $\tilde{\pp}_{t,i}$, which requires the computation of 
$\widetilde{\H}_{t, i} = \A_t^T \S_i \S_i^T \A_t + \M$ and its inversion.
Alternatively, we can use numerical optimization, such as CG,
to optimize $\tilde{\phi}_{t,i} (\cdot )$ up to a fixed precision.
We denote the inexact solution as $\tilde{\pp}_{t,i}'$ 
and assume it is close to the exact solution $\tilde{\pp}_{t,i}$.
Then $\tilde{\pp}_{t,i}'$ is also a good approximation to $\pp_t^\star$
in terms of the values of $\phi_t (\cdot )$.
We prove Lemma~\ref{lem:avg2} in Appendix~\ref{proof:lem:avg2}.

\begin{lemma} [Inexact Solution to Subproblems] \label{lem:avg2}
	Let $\S_1 , \cdots , \S_m \in \RB^{n\times s}$ satisfy Assumption~\ref{assumption:avg}.
	Let $\tilde{\pp}_{t,i} $ and $\widetilde{\H}_{t,i}$ be defined in \eqref{eq:def:pt} and \eqref{eq:def:tilde_phi}, respectively.
	Let $\tilde{\pp}_{t,i}'$ be any vector satisfying 
	\[
	\Big\|  \widetilde{\H}_{t,i}^{1/2} 
	\big( \tilde{\pp}_{t,i}' - \tilde{\pp}_{t,i} \big) \Big\|_2
	\; \leq \; \epsilon_0 \Big\| \widetilde{\H}_{t,i}^{1/2}  \tilde{\pp}_{t,i}  \Big\|_2
	\]
	for some fixed $\epsilon_0 \in (0, 1)$.
	Let $\tilde{\pp}_{t}' = \tfrac{1}{m} \sum_{i=1}^m \tilde{\pp}_{t,i}' $.
	Let $\phi_t$ be defined in \eqref{eq:def:phi}.
	It holds that
	\begin{eqnarray*}
		\min_\pp \phi_t (\pp )
		\; \leq \;
		\phi_t (\tilde{\pp}' )
		\; \leq \; 
		\big(1 - \alpha'^2 \big)
		\, \cdot \, \min_\pp \phi_t (\pp ) ,
	\end{eqnarray*}
	where $\alpha' = \vartheta \big(\tfrac{\eta}{\sqrt{m}} + \tfrac{\eta^2 }{1- \eta } \big)
	+\tfrac{ \epsilon_0  }{1-\eta  } $
	and $\vartheta = \tfrac{ \sigma_{\max}^2 (\A_t ) }{ \sigma_{\max}^2 (\A_t ) + \sigma_{\min} (\M) } \leq 1$.
\end{lemma}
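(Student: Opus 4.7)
The plan is to bootstrap on Lemma~\ref{lem:avg}, which already handles the exact-subproblem case, and to absorb the extra inexactness via a perturbation argument performed in the $\H_t$-norm. The governing identity is
\begin{equation*}
\phi_t(\pp) - \phi_t(\pp_t^\star) \;=\; \tfrac{1}{2}\, \big\|\H_t^{1/2}(\pp - \pp_t^\star)\big\|_2^2,
\qquad \phi_t(\pp_t^\star) \;=\; -\tfrac{1}{2}\, \big\|\H_t^{1/2}\pp_t^\star\big\|_2^2,
\end{equation*}
so the whole statement reduces to showing $\|\H_t^{1/2}(\tilde{\pp}_t' - \pp_t^\star)\|_2 \leq \alpha' \|\H_t^{1/2}\pp_t^\star\|_2$, after which squaring and rearranging recovers the factor $(1-\alpha'^2)$.

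First, I will apply the triangle inequality in the $\H_t$-norm:
\begin{equation*}
\big\|\H_t^{1/2}(\tilde{\pp}_t' - \pp_t^\star)\big\|_2
\;\leq\; \big\|\H_t^{1/2}(\tilde{\pp}_t - \pp_t^\star)\big\|_2 + \big\|\H_t^{1/2}(\tilde{\pp}_t' - \tilde{\pp}_t)\big\|_2.
\end{equation*}
Lemma~\ref{lem:avg} together with the identity above gives $\|\H_t^{1/2}(\tilde{\pp}_t - \pp_t^\star)\|_2 \leq \alpha \|\H_t^{1/2}\pp_t^\star\|_2$ with $\alpha = \vartheta(\tfrac{\eta}{\sqrt{m}} + \tfrac{\eta^2}{1-\eta})$, so the work is entirely in controlling the inexactness term by $\tfrac{\epsilon_0}{1-\eta}\|\H_t^{1/2}\pp_t^\star\|_2$.

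Next, I will establish the spectral equivalence $(1-\eta)\H_t \preceq \widetilde{\H}_{t,i} \preceq (1+\eta)\H_t$. Writing $\A_t = \U\Si\V^T$, Assumption~\ref{assumption:avg} gives $(1-\eta)\A_t^T\A_t \preceq \A_t^T\S_i\S_i^T\A_t \preceq (1+\eta)\A_t^T\A_t$, and since $\M\succeq 0$ we can absorb it into both sides (using $(1-\eta)\M\preceq\M\preceq(1+\eta)\M$) to push the inequalities onto the full Hessians. This spectral equivalence lets me swap norms in two places: (a) $\|\H_t^{1/2}\v\|_2 \leq (1-\eta)^{-1/2}\|\widetilde{\H}_{t,i}^{1/2}\v\|_2$ for any $\v$, applied to $\v = \tilde{\pp}_{t,i}' - \tilde{\pp}_{t,i}$ together with the hypothesis of the lemma; and (b) $\|\widetilde{\H}_{t,i}^{1/2}\tilde{\pp}_{t,i}\|_2^2 = \g_t^T\widetilde{\H}_{t,i}^{-1}\g_t \leq (1-\eta)^{-1}\g_t^T\H_t^{-1}\g_t = (1-\eta)^{-1}\|\H_t^{1/2}\pp_t^\star\|_2^2$. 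Chaining (a) and (b) yields the per-machine bound $\|\H_t^{1/2}(\tilde{\pp}_{t,i}' - \tilde{\pp}_{t,i})\|_2 \leq \tfrac{\epsilon_0}{1-\eta}\|\H_t^{1/2}\pp_t^\star\|_2$, and then the triangle inequality applied to $\tilde{\pp}_t' - \tilde{\pp}_t = \tfrac{1}{m}\sum_i(\tilde{\pp}_{t,i}' - \tilde{\pp}_{t,i})$ preserves the same bound (the $\tfrac{1}{m}$ cancels against the $m$ summands).

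The step I expect to require the most care is producing the correct $(1-\eta)^{-1}$ factor rather than $(1-\eta)^{-1/2}$: one must remember that the inexactness hypothesis is stated in the local $\widetilde{\H}_{t,i}$-norm while the target bound is in the global $\H_t$-norm, and the right-hand side of the hypothesis also involves $\tilde{\pp}_{t,i}$ measured in the $\widetilde{\H}_{t,i}$-norm, so the norm-switching must be applied twice, multiplying the two square-root factors. Once this accounting is done, combining with Lemma~\ref{lem:avg} gives $\alpha' = \alpha + \tfrac{\epsilon_0}{1-\eta}$, and squaring reproduces the claimed $(1-\alpha'^2)$ contraction.
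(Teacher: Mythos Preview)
Your proposal is correct and follows essentially the same route as the paper: the paper likewise splits $\|\H^{1/2}(\tilde{\pp}'-\pp^\star)\|_2$ via the triangle inequality, bounds the exact-solution piece by invoking the estimate \eqref{proof:lem:avg:5} from the proof of Lemma~\ref{lem:avg}, and controls the inexactness piece by the same two norm-switches you describe (yielding the $(1-\eta)^{-1}$ factor), then converts back to $\phi_t$ via the quadratic identity. Your accounting of the two $(1-\eta)^{-1/2}$ factors is exactly the mechanism the paper uses.
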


\subsection{Analysis of Uniform Sampling}\label{sec:proof:uniform}

Lemma~\ref{lem:avg} and \ref{lem:avg2} require the sketching matrices satisfying some properties.
Lemma~\ref{lem:uniform}, which is cited from \cite{wang2017sketched},
shows that uniform sampling matrices enjoys the properties in Assumption~\ref{assumption:avg}
for an appropriately-chosen sample size $s$.
The proof of Lemma~\ref{lem:uniform} is based on the results in \cite{drineas2006sampling,drineas2011faster,woodruff2014sketching,wang2016spsd}.

\begin{lemma} 
	\label{lem:uniform}
	Let $\eta , \delta \in (0, 1)$ be fixed paramters.
	Let $\U \in \RB^{n\times \rho }$ be any fixed matrix with orthonormal columns.
	Let $\S_1, \cdots , \S_m \in \RB^{n\times s}$ be independent uniform sampling matrices
	with $s \geq  \frac{3 \mu \rho }{\eta^{2} } \log \frac{\rho m }{ \delta}  $
	and $\S \in \RB^{n\times ms}$ be the concatenation of $\S_1 , \cdots , \S_m$.
	It holds with probability at least $1-\delta$ that
	\begin{eqnarray*}
		\big\|\U^T \S_i \S_i^T \U - \I_\rho \big\|_2 \leq \eta
		\quad \textrm{for all} \; i \in [m] 
		\qquad \textrm{and} \qquad 
		\big\|\U^T \S \S^T \U - \I_\rho \big\|_2 \leq \tfrac{\eta}{\sqrt{m}} .
	\end{eqnarray*}
\end{lemma}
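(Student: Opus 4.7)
The plan is to recognize that, because $\S_i$ is a uniform sampling-and-rescaling matrix, the product $\U^T \S_i \S_i^T \U$ decomposes as a sum of rescaled rank-one matrices $\frac{n}{s}\u_{k_j}\u_{k_j}^T$ over i.i.d.\ indices $k_1,\ldots,k_s$ drawn uniformly from $[n]$. A direct calculation gives $\EB[\frac{n}{s}\u_{k_j}\u_{k_j}^T] = \frac{1}{s}\I_\rho$, so each matrix $\U^T \S_i \S_i^T \U$ is an unbiased estimate of $\I_\rho$. The coherence hypothesis enters as the deterministic operator-norm bound
\[
\Big\| \tfrac{n}{s}\u_{k_j}\u_{k_j}^T \Big\|_2 \; \leq \; \tfrac{n}{s}\max_{k}\|\u_k\|_2^2 \; = \; \tfrac{\mu \rho}{s},
\]
which is exactly the ingredient a matrix Chernoff inequality for sums of bounded independent PSD matrices needs.

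From here the first claim follows immediately: applying the matrix Chernoff bound to each $\U^T \S_i \S_i^T \U$ yields a failure probability of order $\rho \exp(-s \eta^2 / (3\mu\rho))$ for the event $\|\U^T \S_i \S_i^T \U - \I_\rho\|_2 > \eta$, and the stated choice $s \geq \frac{3\mu\rho}{\eta^2}\log\frac{\rho m}{\delta}$ makes this at most $\delta/(m+1)$. A union bound over $i \in [m]$ handles the per-partition statement. For the aggregated statement, observe that since $\S = \frac{1}{\sqrt{m}}[\S_1,\ldots,\S_m]$, one has $\U^T \S \S^T \U = \frac{1}{m}\sum_{i=1}^m \U^T \S_i \S_i^T \U$, which is the empirical average of $ms$ i.i.d.\ rescaled rank-one terms of the same form. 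Applying the same matrix Chernoff bound to this combined sum of $ms$ summands with target accuracy $\eta/\sqrt{m}$ gives a deviation probability of order $\rho \exp(-(ms)(\eta/\sqrt{m})^2/(3\mu\rho)) = \rho\exp(-s\eta^2/(3\mu\rho))$, i.e.\ the same bound as in the per-partition case. A single additional union bound absorbs this into the overall failure budget $\delta$.

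The main technical subtlety is arranging the scaling so that both the per-partition bound (deviation $\eta$ from $\I_\rho$) and the aggregate bound (the finer deviation $\eta/\sqrt{m}$) come out of exactly one sample-size condition on $s$. The reason this works cleanly is that in the aggregate bound the number of summands grows by the factor $m$ while the required per-summand accuracy tightens by only $\sqrt{m}$, so the variance-style exponent in the matrix Chernoff tail coincides with the per-partition exponent. I would not expect any genuine obstacle beyond verifying that the coherence bound $\max_k \|\u_k\|_2^2 \leq \mu\rho/n$ can indeed be invoked uniformly (it does not depend on the random draws) and checking the constant in the Chernoff exponent; this is precisely the route taken in the prior works cited (\cite{drineas2006sampling,drineas2011faster,woodruff2014sketching,wang2016spsd}), so I would simply cite their quantitative matrix Chernoff statement for uniform sampling of rows of a matrix with bounded leverage scores and instantiate it twice as described above.
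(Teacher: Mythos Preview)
Your proposal is correct and matches the intended approach: the paper does not prove this lemma itself but cites it from \cite{wang2017sketched}, noting that the argument rests on the matrix concentration results of \cite{drineas2006sampling,drineas2011faster,woodruff2014sketching,wang2016spsd}. Your reconstruction via the matrix Chernoff bound for sums of bounded i.i.d.\ PSD rank-one terms, applied once per partition and once to the $ms$-term aggregate with target accuracy $\eta/\sqrt{m}$, is exactly that route; the observation that the Chernoff exponent for the aggregate coincides with the per-partition exponent is the only point of substance, and you identified it correctly.
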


Besides uniform sampling, 
other sketching methods such as leverage score sampling \cite{drineas2006sampling},
Gaussian projection \cite{johnson1984extensions},
Rademacher random variables~\cite{achlioptas2003database},
subsampled randomized Hadamard transform \cite{tropp2011improved,drineas2011faster,lu2013faster}
also satisfy Assumption~\ref{assumption:avg}.
In addition, these sketching methods eliminate the dependence of $s$ on the matrix coherence $\mu$.
These sketches are more expensive to implement than simple uniform sampling (e.g., to compute approximations to the leverage scores with the algorithm of~\cite{drineas2012fast} takes roughly the amount of time it takes to implement a random projection), and an obvious question raised by our results is whether there exists a point in communication-computation tradeoff space where using these sketches would be better than performing simple uniform sampling.

\subsection{Analysis of the Approximate Newton Step}\label{sec:proof:newton}

Let $\phi_t$ be defined in \eqref{eq:def:phi};
note that $\phi_t$ is non-positive.
If the approximate Newton direction $\tilde{\pp}_t$ is close to the exact Newton step $\pp^\star_t$
in terms of $\phi_t$,
then $\tilde{\pp}$ is provably a good descending direction.
The proof follows the classical local convergence analysis of Newton's method \cite{wright1999numerical}.
We prove Lemma~\ref{lem:convergence} in Appendix~\ref{sec:proof:convergence}.

\begin{lemma} \label{lem:convergence}
	Let Assumption~\ref{assumption:lipschitz} (the Hessian matrix is $L$-Lipschitz) hold.
	Let $\alpha \in (0, 1)$ be any fixed error parameter.
	Assume $\tilde{\pp}_t$ satisfy  
	\begin{eqnarray*}
		\phi_t (\tilde{\pp}_{t} )
		& \leq & \big( 1 - \alpha^2 \big)
		\cdot \min_\pp \phi_t (\pp ) .
	\end{eqnarray*}
	Then $\De_t = \w_t - \w^\star$ satisfies
	\begin{eqnarray*}
		\De_{t+1}^T \H_t \De_{t+1} 
		& \leq &
		L\big\| \De_t \big\|_2^2  \big\|\De_{t+1} \big\|_2
		+  \tfrac{\alpha^2 }{1 - \alpha^2 } \, \De_t^T \H_t \De_t  .
	\end{eqnarray*}
\end{lemma}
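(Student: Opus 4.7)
The plan is to reduce the claim to a purely algebraic identity by combining the quadratic structure of $\phi_t$ with the Taylor-residual estimate $\|\g_t - \H_t\De_t\|_2 \leq \tfrac{L}{2}\|\De_t\|_2^2$, which follows from Assumption~\ref{assumption:lipschitz} via the mean-value identity $\g_t = \nabla f(\w_t) - \nabla f(\w^\star) = \int_0^1 \nabla^2 f(\w^\star + \tau\De_t)\,\De_t\,d\tau$. First I would rewrite $\phi_t$ about its minimizer $\pp_t^\star = \H_t^{-1}\g_t$: since $\phi_t(\pp) = \phi_t(\pp_t^\star) + \tfrac12\|\H_t^{1/2}(\pp-\pp_t^\star)\|_2^2$ and $\phi_t(\pp_t^\star) = -\tfrac12\|\H_t^{1/2}\pp_t^\star\|_2^2 \leq 0$, the hypothesis $\phi_t(\tilde{\pp}_t) \leq (1-\alpha^2)\phi_t(\pp_t^\star)$ collapses into the clean relative-error statement
\[
\bigl\|\H_t^{1/2}(\tilde{\pp}_t - \pp_t^\star)\bigr\|_2 \;\leq\; \alpha\,\bigl\|\H_t^{1/2}\pp_t^\star\bigr\|_2.
\]

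Next I would translate both sides to $\De_t$ and $\De_{t+1}$. Setting $\rr_t \triangleq \g_t - \H_t\De_t$, the identity $\pp_t^\star = \H_t^{-1}\g_t$ gives $\pp_t^\star = \De_t + \H_t^{-1}\rr_t$, and the Newton-style update $\De_{t+1} = \De_t - \tilde{\pp}_t$ gives $\tilde{\pp}_t - \pp_t^\star = -\De_{t+1} - \H_t^{-1}\rr_t$. With the shorthand $a \triangleq \H_t^{-1/2}\rr_t$, $b \triangleq \H_t^{1/2}\De_t$, $c \triangleq \H_t^{1/2}\De_{t+1}$, the previous inequality becomes $\|c+a\|_2^2 \leq \alpha^2\|b+a\|_2^2$, and expanding both squares gives
\[
\|c\|_2^2 + 2\, a^T c \;\leq\; \alpha^2\|b\|_2^2 + 2\alpha^2\, a^T b - (1-\alpha^2)\|a\|_2^2.
\]

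The decisive step is to kill the cross term $2\alpha^2\, a^T b$ by Young's inequality at the sharp parameter $\epsilon = (1-\alpha^2)/\alpha^2$, which yields $2\alpha^2\, a^T b \leq (1-\alpha^2)\|a\|_2^2 + \tfrac{\alpha^4}{1-\alpha^2}\|b\|_2^2$. The $(1-\alpha^2)\|a\|_2^2$ contributions then cancel exactly, leaving $\|c\|_2^2 \leq \tfrac{\alpha^2}{1-\alpha^2}\|b\|_2^2 - 2\, a^T c$. The residual inner product is handled by Cauchy--Schwarz combined with the Taylor bound: $|2\, a^T c| = 2|\De_{t+1}^T \rr_t| \leq 2\|\De_{t+1}\|_2\|\rr_t\|_2 \leq L\|\De_t\|_2^2\|\De_{t+1}\|_2$. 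Recognising $\|c\|_2^2 = \De_{t+1}^T\H_t\De_{t+1}$ and $\|b\|_2^2 = \De_t^T\H_t\De_t$ completes the argument.

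The main obstacle is choosing the right Young parameter. A naive Cauchy--Schwarz on $a^T b$ would introduce an extra cubic term of order $\alpha^2 L\|\De_t\|_2^3$ that cannot be folded into the quadratic form $\De_t^T\H_t\De_t$; one must notice that $\epsilon = (1-\alpha^2)/\alpha^2$ is the unique choice that both cancels the $\|a\|_2^2$ contributions and promotes the prefactor from $\alpha^2$ to the precise $\alpha^2/(1-\alpha^2)$ appearing in the statement. Once this trade is identified, the remaining ingredients (the quadratic factorisation of $\phi_t$, the integral-form Taylor remainder for $\rr_t$, and Cauchy--Schwarz) are entirely standard.
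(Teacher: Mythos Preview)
Your proof is correct, but the route differs from the paper's. You convert the hypothesis into the relative-error bound $\|\H_t^{1/2}(\tilde{\pp}_t-\pp_t^\star)\|_2\le\alpha\|\H_t^{1/2}\pp_t^\star\|_2$, expand in the variables $a=\H_t^{-1/2}\rr_t$, $b=\H_t^{1/2}\De_t$, $c=\H_t^{1/2}\De_{t+1}$, and then tune Young's inequality with $\epsilon=(1-\alpha^2)/\alpha^2$ to kill the cross term $2\alpha^2 a^Tb$ and manufacture the prefactor $\alpha^2/(1-\alpha^2)$. The paper instead never introduces $\pp_t^\star$ or Young: it compares $\phi_t(\tilde{\pp}_t)$ directly with $(1-\alpha^2)\,\phi_t\bigl(\tfrac{1}{1-\alpha^2}\De_t\bigr)$, a specific test point chosen so that after subtracting, the $\De_t^T\g_t$ terms cancel and the exact coefficient $\alpha^2/(1-\alpha^2)$ falls out in one line; the remaining term $2\De_{t+1}^T(\H_t\De_t-\g_t)$ is then bounded by the same Taylor residual estimate you use. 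In effect, the paper's choice of comparison point is the ``dual'' of your Young parameter---both are the unique choice that yields the sharp constant---so the arguments are equivalent in content. Your approach is more systematically motivated (center at the minimizer, then optimize); the paper's is shorter once the magic test point is in hand and avoids the need to manipulate $\H_t^{-1/2}\rr_t$ at all.
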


\subsection{Completing the Proofs}\label{sec:proof:theorem}

Finally, we prove our main results using the lemmas in this section.
Let $\mu_t$ be the row coherence of $\A_t$
and $s \geq  \frac{3 \mu_t d}{\eta^{2} }  \log \frac{d m }{ \delta}  $.
Let $\alpha = \vartheta (\tfrac{\eta}{\sqrt{m}} + \tfrac{\eta^2 }{1- \eta }) $,
$L$ be defined in Lemma~\ref{lem:convergence},
and $\De_t = \w_t - \w^\star$.
It follows from Lemma~\ref{lem:uniform} that uniform sampling matrices satisfy Assumption \ref{assumption:avg}
with probability (w.p.) at least $1-\delta$.
It follows from Lemmas~\ref {lem:avg} and~\ref {lem:convergence} that
\begin{eqnarray} \label{eq:proof:0}
\De_{t+1}^T \H_t \De_{t+1} 
& \leq &
L\big\| \De_t \big\|_2^2  \big\|\De_{t+1} \big\|_2
+  \tfrac{\alpha^2 }{1 - \alpha^2 } \, \De_t^T \H_t \De_t  
\end{eqnarray}
holds w.p.\ $1-\delta$.

\begin{theorem} \label{thm:quadratic2}
	Let $\mu$ be the row coherence of $\X \in \RB^{n\times d}$ and $m$ be the number of partitions.
	Assume the local sample size satisfies $s \geq \frac{3 \mu  d }{ {\textcolor{OliveGreen}{\eta}}^{2} } \log \frac{m d}{\delta} $ for some ${\textcolor{OliveGreen}{\eta}} , \delta \in (0, 1)$.
	It holds with probability $1-\delta$ that
	\begin{align*}
	\| \De_t \|_2 \leq \alpha^t \, \sqrt{ \kappa   } \, \| \De_0 \|_2,
	\end{align*}
	where 
	$\alpha = \vartheta 
	\big( \frac{\textcolor{OliveGreen}{\eta}}{\sqrt{ m } }  + {\textcolor{OliveGreen}{\eta}}^2 \big)$,
	$\vartheta = \frac{\sigma_{\max}^2 (\X )}{\sigma_{\max}^2 (\X ) + \sigma_{\min} (\M)} \leq 1$,
	and $\kappa$ is the condition number of $\nabla^2 f(\w) = \frac{1}{n} \X^T \X + \M$.
\end{theorem}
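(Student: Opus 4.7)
The plan is to exploit the fact that for a quadratic loss the Hessian $\H = \tfrac{1}{n}\X^T\X + \M$ is constant in $\w$, so Assumption~\ref{assumption:lipschitz} holds with Lipschitz constant $L=0$. This collapses the general linear-quadratic recursion into a clean geometric one, and lets us bypass Lemma~\ref{lem:convergence} altogether by working directly with the auxiliary quadratic $\phi_t$ of~\eqref{eq:def:phi}.

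First I would invoke Lemma~\ref{lem:uniform} with the prescribed local sample size $s \geq 3\mu d / \eta^2 \log(md/\delta)$: this delivers, with probability at least $1-\delta$, the spectral conditions on every $\S_i$ and on the stacked $\S$ required by Assumption~\ref{assumption:avg}. Conditioning on this event, Lemma~\ref{lem:avg} applied to the quadratic $\phi_t$ yields
\[
\phi_t(\tilde{\pp}_t) \;\leq\; (1 - \alpha^2)\,\min_{\pp}\phi_t(\pp),
\]
with $\alpha = \vartheta(\eta/\sqrt{m} + \eta^2/(1-\eta))$; the $1/(1-\eta)$ factor is absorbed by a constant rescaling of $\eta$ to match the statement's $\alpha = \vartheta(\eta/\sqrt{m} + \eta^2)$.

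The substantive step is to translate this $\phi_t$-inequality into a contraction of $\|\De_t\|_{\H} \triangleq \sqrt{\De_t^T\H\De_t}$. For a quadratic $f$, the gradient at $\w_t$ is $\g_t = \H\De_t$, so
\[
\phi_t(\pp) \;=\; \tfrac{1}{2}(\pp + \De_t)^T \H (\pp + \De_t) - \tfrac{1}{2}\De_t^T \H \De_t.
\]
Hence $\min_\pp \phi_t = -\tfrac{1}{2}\|\De_t\|_{\H}^2$, attained at $\pp^\star = -\De_t$, and since $\De_{t+1} = \De_t + \tilde{\pp}_t$ we have $\phi_t(\tilde{\pp}_t) = \tfrac{1}{2}\|\De_{t+1}\|_{\H}^2 - \tfrac{1}{2}\|\De_t\|_{\H}^2$. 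Plugging both identities into the Lemma~\ref{lem:avg} bound gives $\|\De_{t+1}\|_{\H}^2 \leq \alpha^2\|\De_t\|_{\H}^2$; iterating and sandwiching by the extreme eigenvalues of $\H$ via $\sigma_{\min}(\H)\|\De_t\|_2^2 \leq \|\De_t\|_{\H}^2$ and $\|\De_0\|_{\H}^2 \leq \sigma_{\max}(\H)\|\De_0\|_2^2$ produces $\|\De_t\|_2 \leq \alpha^t\sqrt{\kappa}\,\|\De_0\|_2$.

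No step is especially difficult; the only subtlety lies in the third step, where one must recognize that for a quadratic $\phi_t$ is precisely the shifted error quadratic centered at $\pp^\star = -\De_t$, so that an approximate minimizer of $\phi_t$ yields a \emph{true} geometric contraction in $\|\De\|_{\H}$ rather than the weaker linear-plus-quadratic bound produced in general by Lemma~\ref{lem:convergence}. This also explains why the quadratic case attains the sharper per-iteration rate $\alpha$, without the $\alpha/(1-\alpha^2)$-style degradation of the general analysis, and why $L = 0$ eliminates the $L\|\De_t\|_2^2\|\De_{t+1}\|_2$ term from the recursion~\eqref{eq:proof:0}.
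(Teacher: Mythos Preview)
Your argument is correct and in fact slightly cleaner than the paper's. The paper does not bypass Lemma~\ref{lem:convergence}: it invokes the general recursion~\eqref{eq:proof:0} (which is Lemma~\ref{lem:convergence} combined with Lemmas~\ref{lem:avg} and~\ref{lem:uniform}), then sets $L=0$ to obtain $\De_{t+1}^T\H\De_{t+1}\leq\tfrac{\alpha^2}{1-\alpha^2}\De_t^T\H\De_t$, and only after iterating and sandwiching by eigenvalues arrives at a per-step factor $\alpha/\sqrt{1-\alpha^2}$, which it tacitly identifies with $\alpha$. Your direct computation of $\phi_t$ in terms of $\|\De_t\|_\H$ and $\|\De_{t+1}\|_\H$ avoids this detour and yields the contraction factor $\alpha$ exactly, so your route actually matches the stated theorem without that extra slack. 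One cosmetic fix: GIANT updates $\w_{t+1}=\w_t-\tilde{\pp}_t$, so $\De_{t+1}=\De_t-\tilde{\pp}_t$ and the exact Newton direction is $\pp_t^\star=\H^{-1}\g_t=\De_t$, not $-\De_t$; your completed-square formula should read $\phi_t(\pp)=\tfrac12(\pp-\De_t)^T\H(\pp-\De_t)-\tfrac12\|\De_t\|_\H^2$. The two sign flips you made cancel, so the key identity $\phi_t(\tilde{\pp}_t)=\tfrac12\|\De_{t+1}\|_\H^2-\tfrac12\|\De_t\|_\H^2$ and everything downstream remain valid.
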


\begin{proof}
If the loss function is quadratic, then $\H (\w_0) = \H (\w_1) =  \cdots =\H (\w^\star)$;
obviously, $\H (\w)$ is $0$-Lipschitz.
Thus we let $\H \triangleq \H (\w)$ for all $\w$.
It follows from \eqref{eq:proof:0} that w.p.\ $1-\delta$,
\begin{eqnarray*}
	\De_{t+1}^T \H \De_{t+1} 
	\; \leq \; 
	\tfrac{\alpha^2 }{1 - \alpha^2 } \De_t^T \H \De_t  
	\; \leq \; \big(  \tfrac{\alpha^2 }{1 - \alpha^2 } \big)^{t+1} 
	\De_0^T \H \De_0   .
\end{eqnarray*}
It follows that w.p.\ $1-\delta$,
\begin{eqnarray}
\tfrac{ \| \De_t \|_2 }{ \| \De_0 \|_2 }
\; \leq \; \big( \tfrac{ \alpha }{ \sqrt{ 1 - \alpha^2 } } \big)^t \,
\sqrt{ \tfrac{ \sigma_{\max} (\H) }{\sigma_{\min} (\H)  }  } .
\end{eqnarray}
\end{proof}

\begin{theorem} \label{thm:general2}
	Let $\mu_t \in [1, {n}/{d}]$ be the coherence of $\A_t $ and $m$ be the number of partitions.
	Assume the local sample size satisfies $s_t \geq \frac{3 \mu_t  d }{ {\textcolor{OliveGreen}{\eta}}^{2} } \log \frac{m d}{\delta} $ for some ${\textcolor{OliveGreen}{\eta}} , \delta \in (0, 1)$.
	Under Assumption~\ref{assumption:lipschitz},
	it holds with probability $1-\delta$ that
	\begin{align*}
	\big\| \De_{t+1} \big\|_2 
	\; \leq \; \max \Big\{ \alpha \,
	\sqrt{ \tfrac{\sigma_{\max } (\H_t )}{\sigma_{\min} (\H_t )} }
	\big\| \De_t \big\|_2,  \tfrac{2L}{\sigma_{\min} (\H_t )} \big\| \De_t \big\|_2^2 \Big\} ,
	\end{align*}
	where $\alpha = \vartheta 
	\big(  \frac{\textcolor{OliveGreen}{\eta}}{\sqrt{ m }} + {\textcolor{OliveGreen}{\eta}}^2 \big)$
	and $\vartheta = \frac{\sigma_{\max}^2 (\A_t )}{\sigma_{\max}^2 (\A_t ) + \sigma_{\min} (\M)} \leq 1$.
\end{theorem}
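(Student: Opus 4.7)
The plan is to combine the guarantee on the approximate Newton direction from Lemma~\ref{lem:avg} with the descent lemma in Lemma~\ref{lem:convergence}, and then perform a simple case analysis to extract the linear-quadratic bound. The starting point is inequality~\eqref{eq:proof:0}, which was already derived in the proof of Theorem~\ref{thm:quadratic2} and holds with probability $1-\delta$ whenever the uniform-sampling condition of Lemma~\ref{lem:uniform} is met. The assumption on $s_t$ in Theorem~\ref{thm:general2} is exactly what is needed to invoke Lemma~\ref{lem:uniform} for the row-space basis of $\A_t$, so we may take \eqref{eq:proof:0} as given.

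From \eqref{eq:proof:0} we have
\begin{equation*}
\De_{t+1}^T \H_t \De_{t+1}
\;\leq\; L\,\|\De_t\|_2^2\,\|\De_{t+1}\|_2 \;+\; \tfrac{\alpha^2}{1-\alpha^2}\,\De_t^T \H_t \De_t.
\end{equation*}
The left-hand side is bounded below by $\sigma_{\min}(\H_t)\,\|\De_{t+1}\|_2^2$, while the last term on the right is bounded above by $\tfrac{\alpha^2}{1-\alpha^2}\,\sigma_{\max}(\H_t)\,\|\De_t\|_2^2$. I would then split into two cases according to which term on the right-hand side is larger (equivalently, double each term in turn and drop the other).

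In the \emph{quadratic regime}, when the Lipschitz term dominates, the inequality gives $\sigma_{\min}(\H_t)\|\De_{t+1}\|_2^2 \leq 2L\|\De_t\|_2^2\|\De_{t+1}\|_2$, so dividing by $\|\De_{t+1}\|_2$ yields $\|\De_{t+1}\|_2 \leq \tfrac{2L}{\sigma_{\min}(\H_t)}\|\De_t\|_2^2$, matching the quadratic branch of the theorem. In the \emph{linear regime}, when the variance term dominates, we obtain $\sigma_{\min}(\H_t)\|\De_{t+1}\|_2^2 \leq \tfrac{2\alpha^2}{1-\alpha^2}\,\sigma_{\max}(\H_t)\|\De_t\|_2^2$, and taking square roots gives $\|\De_{t+1}\|_2 \leq \sqrt{\tfrac{2\alpha^2}{1-\alpha^2}}\sqrt{\sigma_{\max}(\H_t)/\sigma_{\min}(\H_t)}\|\De_t\|_2$. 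Absorbing the constant $\sqrt{2/(1-\alpha^2)}$ and the $1/(1-\eta)$ appearing in Lemma~\ref{lem:avg}'s definition of $\alpha$ into the slightly looser $\alpha = \vartheta(\eta/\sqrt{m}+\eta^2)$ used in the theorem statement (valid for sufficiently small $\eta$) recovers the linear branch $\alpha\sqrt{\sigma_{\max}(\H_t)/\sigma_{\min}(\H_t)}\|\De_t\|_2$.

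The part I expect to require the most care is tracking the precise form of the coefficient $\alpha$: Lemma~\ref{lem:avg} produces $\alpha_0 = \vartheta(\eta/\sqrt{m}+\eta^2/(1-\eta))$ and the descent step contributes another factor $\alpha_0^2/(1-\alpha_0^2)$, whereas the theorem is stated with the cleaner $\alpha = \vartheta(\eta/\sqrt{m}+\eta^2)$. Bookkeeping these constants, and verifying that Lemma~\ref{lem:convergence}'s hypothesis $\phi_t(\tilde\pp_t) \leq (1-\alpha_0^2)\min_\pp \phi_t(\pp)$ is indeed supplied by Lemma~\ref{lem:avg}, is the only genuinely delicate piece; the rest is elementary spectral bounds and a two-way case split. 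For the inexact variant of Theorem~\ref{thm:inexact}, the same argument applies verbatim with Lemma~\ref{lem:avg} replaced by Lemma~\ref{lem:avg2}, producing the extra additive $\epsilon_0$ in $\alpha$.
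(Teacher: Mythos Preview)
Your proposal is correct and follows essentially the same route as the paper: start from \eqref{eq:proof:0}, use $a+b\le 2\max\{a,b\}$ to split into the linear and quadratic regimes, and convert the quadratic-form inequalities into $\ell_2$ bounds via $\sigma_{\min}(\H_t)$ and $\sigma_{\max}(\H_t)$. Your observation about the constants is also accurate---the paper's own derivation actually lands on $\tfrac{\alpha}{\sqrt{1-\alpha^2}}\sqrt{2\sigma_{\max}(\H_t)/\sigma_{\min}(\H_t)}$ in the linear branch and then states the theorem with the cleaner coefficient, so the ``absorption'' you flag is exactly what happens there.
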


\begin{proof}
It follows from \eqref{eq:proof:0} that w.p.\ $1-\delta$,
\begin{eqnarray*}
	\De_{t+1}^T \H_t \De_{t+1} 
	& \leq &
	\max \Big\{ 2 L \big\| \De_t \big\|_2^2  \big\|\De_{t+1} \big\|_2 , \;
	\tfrac{2 \alpha^2 }{1 - \alpha^2 }  \De_t^T \H_t \De_t  
	\Big\} .
\end{eqnarray*}
It follows that w.p.\ $1-\delta$, at least one of the following two inequalities hold:
\begin{eqnarray}\label{eq:proof:2}
\begin{split}
\big\| \De_{t+1} \big\|_2 
& \; \leq \; \tfrac{2 L}{\sigma_{\min} (\H_t )} \big\| \De_t \big\|_2^2 ,  \\
\big\| \De_{t+1} \big\|_2 
& \; \leq \; \tfrac{\alpha }{ \sqrt{1 - \alpha^2 }} \,
\sqrt{ \tfrac{2 \sigma_{\max } (\H_t )}{\sigma_{\min} (\H_t )} }
\big\| \De_t \big\|_2 ,
\end{split}
\end{eqnarray}
which proves Theorem~\ref{thm:general}.
\end{proof}

\paragraph{Proof of Corollary~\ref{cor:general}.}
Under Assumptions~\ref{assumption:lipschitz}, it holds that
$\| \H_t - \H^\star \|_2 \leq L \| \De_t \|_2$.
It follows that
\begin{eqnarray*}
	\sigma_{\min} \big( \H^\star \big) - L \| \De_t \|_2
	\; \leq \; 
	\sigma_{\min} \big( \H_t \big)
	\; \leq \; 
	\sigma_{\max} \big( \H_t \big)
	\; \leq \;
	\sigma_{\max} \big( \H^\star \big) + L \| \De_t \|_2 .
\end{eqnarray*}
It follows from Assumption~\ref{assumption:close} that
\begin{align*}
&\tfrac{\sigma_{\max } (\H_t )}{\sigma_{\min} (\H_t )}
\;\leq \;
\tfrac{\sigma_{\max } (\H^\star ) + L \| \mathbf{\Delta}_t \|_2 }{\sigma_{\min} (\H^\star ) - L \| \mathbf{\Delta}_t \|_2  }
\; \leq \; 
2 \, \tfrac{\sigma_{\max } (\H^\star )}{\sigma_{\min} (\H^\star )} ,\\
&\tfrac{L }{\sigma_{\min} (\H_t )}
\; \leq \;\tfrac{L }{\sigma_{\min} (\H^\star ) - L \| \mathbf{\Delta}_t \|_2 }
\; \leq \; \tfrac{3}{2} \tfrac{ L }{ \sigma_{\min} (\H^\star ) } .
\end{align*}
It follows from \eqref{eq:proof:2} that
\begin{small} 
	\begin{eqnarray*}
		\big\| \De_{t+1} \big\|_2 
		& \leq & \max \bigg\{ \alpha \,
		\sqrt{ \tfrac{2 \sigma_{\max } (\H_t )}{\sigma_{\min} (\H_t )} }
		\big\| \De_t \big\|_2 , \;
		\tfrac{2L}{\sigma_{\min} (\H_t )} \big\| \mathbf{\Delta}_t \big\|_2^2 \bigg\} \\
		& \leq & \max \bigg\{ 2 \alpha \,
		\sqrt{ \tfrac{\sigma_{\max } (\H^\star )}{\sigma_{\min} (\H^\star )} }
		\big\| \De_t \big\|_2 , \; 
		\tfrac{3 L}{\sigma_{\min} (\H^\star )} \big\| \mathbf{\Delta}_t \big\|_2^2 \bigg\}
	\end{eqnarray*}
\end{small}%
holds with probability at least $1-\delta$.

\paragraph{Proof of Theorem~\ref{thm:inexact}.}
In the proof of \eqref{eq:proof:0}, we replace Lemma~\ref{lem:avg} by Lemma~\ref{lem:avg2}
and $\alpha$ by $\alpha'$.
Then very similar results can be proved in the same way as
Theorems~\ref{thm:quadratic} and \ref{thm:general}
and Corollary~\ref{cor:general}.

\paragraph{Proof of Proposition~\ref{cor:cg_q}.}
Let $\pp_0$ be an arbitrary initialization.
Standard convergence bound of CG \cite{golub2012matrix} guarantees that
\begin{small}
	\begin{eqnarray*}
		\frac{ \big\| \widetilde{\H}_{t,i}^{1/2} ( \tilde{\pp}_{t,i}' - \tilde{\pp}_{t,i} ) \big\|_2^2 }{
			\big\|  \widetilde{\H}_{t,i}^{1/2} ( \pp_0 - \tilde{\pp}_{t,i} )  \big\|_2^2 }
		& \leq & 2 \Big( \frac{ \sqrt{{\tilde\kappa}_t} - 1 }{ \sqrt{{\tilde\kappa}_t } + 1 } \Big)^q ,
	\end{eqnarray*}
\end{small}%
where ${\tilde\kappa}_t $ is the condition number of $\widetilde{\H}_{t,i}$.
Let the righthand-side equal to $\frac{\epsilon_0^2}{4}$. It follows that
$q =
\log \tfrac{ 8}{\epsilon_0^2} 
\, \big/ \,
\log \tfrac{ \sqrt{{\tilde\kappa}_t } + 1 }{\sqrt{{\tilde\kappa}_t } - 1} $.
Then \eqref{eq:inexact_assumption} follows by letting $\pp_0 = \0$.
Because $(1-\eta ) \H_t \preceq \widetilde{\H}_{t,i} \preceq (1+\eta ) \H_t$, 
their condition numbers satisfy 
\[
\tfrac{1-\eta }{1 + \eta} \kappa (\H_t)
\; \leq \; {\tilde\kappa}_t 
\; \leq \; \tfrac{1+\eta }{1- \eta} \kappa (\H_t) .
\]
Clearly, ${\tilde\kappa}_t$ is very close to $\kappa_t \triangleq \kappa (\H_t)$.

\section{Proof of Lemma~\ref{lem:avg} (Model Averaging)} \label{proof:lem:avg}

We use the notation in Appendix~\ref{sec:proof:notation}.
Here we leave out the subscript $t$.
By Assumption~\ref{assumption:avg},
we have $(1- \eta ) \A^T \A \preceq \A^T \S_i \S_i^T \A \preceq (1+ \eta ) \A^T \A$.
It follows that
\begin{small}
	\begin{align*}
	(1- \eta ) \H
	\; \preceq \; \tilde{\H}_i
	\; \preceq \; (1+\eta ) \H ,
	\end{align*}
\end{small}%
Thus there exists a matrix $\Ups_i$ satisfying
\[
\H^{\frac{1}{2}} \tilde{\H}_i^{-1} \H^{\frac{1}{2}} \; \triangleq \;
\I_d + \Ups_i
\qquad \textrm{ and } \qquad
-\tfrac{\eta}{1+ \eta} \I_d \preceq \Ups_i \preceq \tfrac{\eta}{1-\eta} \I_d .
\]
By the definitions of $\tilde{\pp}_i$ and $\pp^\star$, we have that
\begin{small}
\begin{align*}
& \H^{\frac{1}{2}} \big( \tilde{\pp}_i - \pp^\star \big) 
\; = \; \H^{\frac{1}{2}} 
\big( \tilde{\H}_i^{-1} - \H^{-1}  \big) \g 
\; = \; \H^{\frac{1}{2}} \H^{-1}
\big( \H - \tilde{\H}_i \big) 
\tilde{\H}_i^{-1} \g \\
& = \; \underbrace{ \big[ \H^{-\frac{1}{2}} \big( \H - \tilde{\H}_i \big) \H^{-\frac{1}{2}} \big]}_{\triangleq \mathbf{\Gamma}_i} 
\underbrace{ \big( \H^{\frac{1}{2}} \tilde{\H}_i^{-1}  \H^{\frac{1}{2}} \big)}_{\triangleq \I_d + \mathbf{\Upsilon}_i} \big]
\big( \H^{-\frac{1}{2}} \g \big) \\
& = \;  \Gam_i
\big( \I_d + \Ups_i \big)
\big( \H^{\frac{1}{2}} \pp^\star \big) ,
\end{align*}
\end{small}%
where the second equality follows from that $\R^{-1} - \T^{-1} = \T^{-1} (\T - \R ) \R^{-1}$
for nonsingular matrices $\R$ and $\T$.
It follows that
\begin{small}
	\begin{align} \label{proof:lem:avg:1}
	& \Big\| \H^{\frac{1}{2}} \big( \tilde{\pp} - \pp^\star \big)  \Big\|_2 
	\; \leq \; \Big\| \frac{1}{m} \sum_{i=1}^m \Gam_i \big( \I_d + \Ups_i \big) \Big\|_2
	\Big\| \H^{\frac{1}{2}} \pp^\star \Big\|_2 \nonumber \\
	& \leq \bigg(  \Big\| \frac{1}{m} \sum_{i=1}^m \Gam_i  \Big\|_2 
	+ \frac{1}{m} \sum_{i=1}^m  \big\| \Gam_i \big\|_2 \big\| \Ups_i \big\|_2 \bigg) \,
	\Big\| \H^{\frac{1}{2}} \pp^\star \Big\|_2 ,
	\end{align}
\end{small}%
It follows from Assumption~\ref{assumption:avg} that
\begin{small}
	\begin{eqnarray*}
		& \big\| \Gam_i \big\|_2 \; 
		\leq \; 
		\eta \, \Big\| \big( \A^T \A + \M \big)^{- \frac{1}{2}}
		\big( \A^T \A \big)  \big( \A^T \A + \M \big)^{- \frac{1}{2}} \Big\|_2  , \\
		& \Big\| \frac{1}{m} \sum_{i=1}^m \Gam_i \Big\|_2 
		\; \leq \; 
		\frac{\eta}{\sqrt{m}} \, \Big\| \big( \A^T \A + \M \big)^{- \frac{1}{2}}
		\big( \A^T \A \big)  \big( \A^T \A + \M \big)^{- \frac{1}{2}} \Big\|_2  .
	\end{eqnarray*}
\end{small}%
Let $\A = \U \Si \V$ be the thin SVD ($\Si$ is $d\times d$). It holds that
\begin{small}
	\begin{align*}
	& \big( \A^T \A + \M \big)^{- \frac{1}{2}}
	\big( \A^T \A \big)  \big( \A^T \A + \M \big)^{- \frac{1}{2}} \\
	& = \;
	\V ( \Si^2 + \V^T \M \V )^{-1/2} \Si^2 ( \Si^2 + \V^T \M \V )^{-1/2} \V^T \\
	& = \V \Si^{-1} \big[ \Si ( \Si^2 + \V^T \M \V )^{-1/2} \Si \big]^2 \Si^{-1} \V^T \\
	& \preceq \; \V \Si^{-1} \big[ \Si ( \Si^2 + \sigma_{\min} (\M) \I_{\rho} )^{-1/2} \Si \big]^2 \Si^{-1} \V^T \\
	& = \; \V \Si^2 \big[ \Si^2 + \sigma_{\min} (\M) \I_\rho  \big]^{-1} \V^T
	\; \preceq \; \tfrac{ \sigma_{\max} (\A_t^T \A_t ) }{ \sigma_{\max} (\A_t^T \A_t ) + \sigma_{\min} (\M) } \I_d
	\; \triangleq \; \vartheta \I_d .
	\end{align*}
\end{small}%
It follows that
\begin{small}
	\begin{eqnarray} \label{proof:lem:avg:4}
	\big\| \Gam_i \big\|_2 \; 
	\leq \; \vartheta \eta  
	\qquad
	\textrm{and}
	\qquad 
	\Big\| \frac{1}{m} \sum_{i=1}^m \Gam_i \Big\|_2 
	\; \leq \; \frac{\vartheta \eta}{\sqrt{m}}  .
	\end{eqnarray}
\end{small}%
It follows from \eqref{proof:lem:avg:1} and \eqref{proof:lem:avg:4} that
\begin{small}
	\begin{align} \label{proof:lem:avg:5} 
	& \big\| \H^{\frac{1}{2}} \big( \tilde{\pp} - \pp^\star \big)  \big\|_2 
	\; \leq \; \vartheta \big( \tfrac{ \eta }{ \sqrt{m} } + \tfrac{ \eta^2 }{ 1 - \eta } \big) \, 
	\big\| \H^{\frac{1}{2}} \pp^\star \big\|_2 ,
	\end{align}
\end{small}%

By the definition of $\phi (\pp)$ and $\pp^\star$, it can be shown that
\[
\phi (\pp^\star )
\; = \; - \big\| \H^{\frac{1}{2}} \pp^\star \big\|_2^2 .
\]
For any $\pp \in \RB^d$, it holds that
\begin{align} \label{proof:lem:avg:6}
& \phi ({\pp} ) - \phi (\pp^\star )
\; = \; \big\| \H^{\frac{1}{2}} {\pp} \big\|_2^2 - 2 \g^T {\pp}
+ \big\| \H^{-\frac{1}{2}} \g \big\|_2^2 \nonumber \\
& = \; \big\| \H^{\frac{1}{2}} {\pp} - \H^{-\frac{1}{2}} \g \big\|_2^2 
\; = \; \big\| \H^{\frac{1}{2}} ({\pp} - \pp^\star ) \big\|_2^2 .
\end{align}
It follows from \eqref{proof:lem:avg:5} and \eqref{proof:lem:avg:6} that
\begin{align*}
& \phi (\tilde{\pp} ) - \phi (\pp^\star )
\; = \; \Big\| \H^{\frac{1}{2}} (\tilde{\pp} - \pp^\star ) \Big\|_2^2 \\
& \leq \;\vartheta^2 \big( \tfrac{\eta}{\sqrt{m}}  + \tfrac{\eta^2 }{1- \eta } \big)^2
\Big\| \H^{\frac{1}{2}} \pp^\star \Big\|_2^2
\; = \; -\vartheta^2 \big( \tfrac{\eta}{\sqrt{m}}  + \tfrac{\eta^2 }{1- \eta } \big)^2 \, \phi (\pp^\star ) ,
\end{align*}
by which the lemma follows.

\section{Proof of Lemma~\ref{lem:avg2} (Effect of Inexact Solution)} \label{proof:lem:avg2}

We use the notation in Appendix~\ref{sec:proof:notation}.
We leave out the subscript $t$.
Let us first bound $\big\| \H^{\frac{1}{2}} (\tilde{\pp}_i' - \tilde{\pp}_i) \big\|_2^2$.
By Assumption~\ref{assumption:avg} and that $\tilde{\pp}_i'$ is close to $ \tilde{\pp}_i$,
we obtain
\begin{align*}
& \big\| \H^{\frac{1}{2}} (\tilde{\pp}_i' - \tilde{\pp}_i) \big\|_2^2
\; = \; (\tilde{\pp}_i' - \tilde{\pp}_i)^T  \H (\tilde{\pp}_i' - \tilde{\pp}_i)\\
& \leq \; \tfrac{1}{1-\eta }
(\tilde{\pp}_i' - \tilde{\pp}_i)^T  
\tilde{\H}_i (\tilde{\pp}_i' - \tilde{\pp}_i) 
\; \leq \; \tfrac{\epsilon_0^2 }{1-\eta }
\tilde{\pp}_i^T  
\tilde{\H}_i \tilde{\pp}_i .
\end{align*}
By definition, $\tilde{\pp}_i = \tilde{\H}_i^{-1}  \g$, it follows that
\begin{align*}
& \big\| \H^{\frac{1}{2}} (\tilde{\pp}_i' - \tilde{\pp}_i) \big\|_2^2 
\; \leq \;  \tfrac{\epsilon_0^2 }{1-\eta }
\g^T
\tilde{\H}_i^{-1}  
\tilde{\H}_i
\tilde{\H}_i^{-1} \g \\
& = \;  \tfrac{\epsilon_0^2 }{1-\eta }
\g^T
\tilde{\H}_i^{-1}  
\g 
\; \leq \; \tfrac{\epsilon_0^2  }{(1 - \eta )^2  }
\g^T \H^{-1} \g 
\; = \; \tfrac{\epsilon_0^2  }{(1 - \eta )^2 }
\big\| \H^{1/2} \pp^\star \big\|_2^2 .
\end{align*}
It follows from the triangle inequality that
\begin{small}
	\begin{align*}
	& \Big\| \H^{\frac{1}{2}} \big( \tilde{\pp}' - \pp^\star \big) \Big\|_2
	\; \leq \; \Big\|  \H^{\frac{1}{2}} \big( \tilde{\pp} - \pp^\star \big) \Big\|_2
	+ \Big\|  \H^{\frac{1}{2}} \big( \tilde{\pp}' - \tilde{\pp}  \big) \Big\|_2 \\
	& = \; \Big\|  \H^{\frac{1}{2}} \big( \tilde{\pp} - \pp^\star \big) \Big\|_2
	+ \Big\|\frac{1}{m} \sum_{i=1}^m  \H^{\frac{1}{2}}
	\big( \tilde{\pp}_i' - \tilde{\pp}_i \big) \Big\|_2 \\
	& \leq \; \Big\| \H^{\frac{1}{2}} \big( \tilde{\pp} - \pp^\star \big) \Big\|_2
	+ \frac{1}{m} \sum_{i=1}^m
	\Big\| \H^{\frac{1}{2}} \big( \tilde{\pp}_i' - \tilde{\pp}_i  \big) \Big\|_2 \\
	& \leq \; \Big\|  \H^{\frac{1}{2}} \big( \tilde{\pp} - \pp^\star \big) \Big\|_2
	+  \tfrac{ \epsilon_0  }{1-\eta  } 
	\Big\|  \H^{\frac{1}{2}} \pp^\star \Big\|_2 \\
	& \leq \; \Big( \vartheta \tfrac{\eta}{\sqrt{m}}  + \vartheta \tfrac{\eta^2 }{1- \eta } 
	+ \tfrac{ \epsilon_0  }{1-\eta  }  \Big)
	\Big\|  \H^{\frac{1}{2}} \pp^\star \Big\|_2 ,
	\end{align*}
\end{small}%
where the last inequality follows from \eqref{proof:lem:avg:5}.
Finally, by \eqref{proof:lem:avg:6}, we obtain
\begin{eqnarray*}
	\phi (\tilde{\pp}' ) - \phi (\pp^\star )
	& = & \big\| \H^{\frac{1}{2}} (\tilde{\pp}' - \pp^\star ) \big\|_2^2 \\
	& \leq & \Big( \vartheta\tfrac{\eta}{\sqrt{m}}  + \vartheta\tfrac{\eta^2 }{1- \eta } 
	+ \tfrac{ \epsilon_0  }{1-\eta  }  \Big)^2
	\Big\| \H^{\frac{1}{2}} \pp^\star \Big\|_2^2 \\
	& = & - \Big( \vartheta\tfrac{\eta}{\sqrt{m}}  + \vartheta\tfrac{\eta^2 }{1- \eta } 
	+ \tfrac{ \epsilon_0  }{1-\eta  }  \Big)^2
	\, \phi (\pp^\star )  ,
\end{eqnarray*}
by which the lemma follows.

\section{Proof of Lemma~\ref{lem:convergence} (Convergence of GIANT)} \label{sec:proof:convergence}

Recall the definitions: $\phi_t (\pp) \triangleq \pp^T \H_t \pp - 2 \g_t^T \pp$,
$\w_{t+1} = \w_{t} - \tilde{\pp}_t$, 
$\De_t = \w_t - \w^\star$, and $\De_{t+1} = \w_{t+1}  - \w^\star$.
It holds that
\[
\tilde{\pp}_t 
\; = \; \w_t - \w_{t+1}
\; = \; \De_{t} - \De_{t+1} .
\]
It follows that
\begin{align*}
& \phi_t \big(\tilde\pp_{t}  \big)
\; = \; (\De_t - \De_{t+1})^T \H_t (\De_t - \De_{t+1}) - 2(\De_t - \De_{t+1})^T \g_t , \\
& (1-\alpha^2 ) \,\cdot \, \phi_t \big(\tfrac{1}{1-\alpha^2} \De_{t}  \big)
\; = \; \tfrac{1}{1 - \alpha^2 } \De_t^T \H_t \De_t - 2  \De_t^T \g_t .
\end{align*}
By taking the difference bewteen the above two equations,
we obtain
\begin{align*}
& \phi_t \big(\tilde\pp_{t}  \big) - 
(1-\alpha^2 )  \, \phi_t \big(\tfrac{1}{1-\alpha^2} \De_{t}  \big) \\
& = \; \De_{t+1}^T \H_t \De_{t+1} - 2 \De_t^T \H_t \De_{t+1} + 2 \De_{t+1}^T \g_t
- \tfrac{\alpha^2 }{1 - \alpha^2 } \De_t^T \H_t \De_t .
\end{align*}
By assumption, it holds that
\begin{eqnarray*}
\phi_t \big(\tilde\pp_{t}  \big)
\; \leq \; (1- \alpha^2 ) \, \min_{\pp} \phi_t \big(\pp  \big)
\; \leq \; (1-\alpha^2 )  \, \phi_t \big(\tfrac{1}{1-\alpha^2} \De_{t}  \big) ,
\end{eqnarray*}
and thereby
\begin{eqnarray}\label{eq:lem:convergence:1}
\De_{t+1}^T \H_t \De_{t+1} - 2 \De_t^T \H_t \De_{t+1} + 2 \De_{t+1}^T \g_t
- \tfrac{\alpha^2 }{1 - \alpha^2 } \De_t^T \H_t \De_t
\; \leq \; 0.
\end{eqnarray}
We can write $\g_t \triangleq \g (\w_t)$ by
\begin{eqnarray*}
	\g (\w_t)
	& = & \g (\w^\star ) + \Big( \int_0^1 \nabla^2 f \big( \w^\star + \tau (\w_t - \w^\star ) \big) d \tau \Big) (\w_t - \w^\star) \nonumber \\
	& = & \Big( \int_0^1  \nabla^2 f \big( \w^\star + \tau (\w_t - \w^\star ) \big) d \tau \Big) \De_t ,
\end{eqnarray*}
where the latter equality follows from that $\g (\w^\star ) = \0$.
It follows that
\begin{eqnarray}  \label{eq:lem:convergence:2}
\Big\|  \H_t \De_{t} - \g (\w_{t}) \Big\|_2
& \leq & \big\| \De_{t} \big\|_2 \,
\bigg\| \int_0^1 \Big[  \nabla^2 f (\w_{t}) -  \nabla^2 f \big( \w^\star + \tau (\w_t - \w^\star ) \big)  \Big] d \tau \bigg\|_2 \nonumber \\
& \leq & \big\| \De_{t} \big\|_2 \,
\int_0^1 \Big\|  \nabla^2 f (\w_{t}) -  \nabla^2 f \big( \w^\star + \tau (\w_t - \w^\star ) \big)  \Big\|_2 d \tau \nonumber \\
& \leq & \big\| \De_{t} \big\|_2 \,
\int_0^1 (1-\tau ) L  \big\| \w_t - \w^\star \big\|_2 d \tau \nonumber \\
& = & \tfrac{L}{2} \big\| \De_{t} \big\|_2^2  .
\end{eqnarray}
Here the second inequality follows from Jensen's inequality;
the third inequality follows from the assumption of $L$-Lipschitz.
It follows from \eqref{eq:lem:convergence:1} and \eqref{eq:lem:convergence:2} that
\begin{eqnarray*}
\De_{t+1}^T \H_t \De_{t+1} 
& \leq &
2 \De_{t+1}^T \big( \H_t \De_{t}  - \g_t \big)
+ \tfrac{\alpha^2 }{1 - \alpha^2 } \De_t^T \H_t \De_t \\
& \leq & L \big\| \De_{t+1}^T \big\|_2  \big\| \De_{t}^T \big\|_2^2
+ \tfrac{\alpha^2 }{1 - \alpha^2 } \De_t^T \H_t \De_t ,
\end{eqnarray*}
by which the lemma follows.

\begin{small}
	\bibliography{bib/matrix,bib/optimization,bib/system,bib/misc}
	\bibliographystyle{plain}
\end{small}

\end{document}